\def\eqref#1{equation~\ref{#1}}
\def\1{\bm{1}}
\DeclareMathAlphabet{\mathsfit}{\encodingdefault}{\sfdefault}{m}{sl}
\SetMathAlphabet{\mathsfit}{bold}{\encodingdefault}{\sfdefault}{bx}{n}
\DeclareMathOperator*{\argmax}{arg\,max}
\newcommand*{\boldcheckmark}{%
  \textpdfrender{
    TextRenderingMode=FillStroke,
    LineWidth=.5pt, 
  }{\checkmark}%
}
\newcommand{\xmark}{\ding{55}}%
\newcommand{\imagenet}{\text{ImageNet}}
\newcommand{\hspg}{\text{HSPG}}
\newcommand{\otovone}{OTOv1{}}
\newcommand{\algacro}{OTOv2{}}
\newcommand{\dualhspg}{\text{DHSPG}}
\newcommand{\cifar}{\text{CIFAR10}}
\newcommand{\cifarhundred}{\text{CIFAR100}}
\newcommand{\fashionmnist}{\text{Fashion-MNIST}}
\newcommand{\svnh}{\text{SVNH}}
\newcommand{\resnetfifty}{\text{ResNet50}}
\newcommand{\densenetonetwoone}{\text{DenseNet121}}
\newcommand{\vgg}{\text{VGG16}}
\newcommand{\vggbn}{\text{VGG16-BN}}
\newcommand{\bert}{\text{Bert}}
\newcommand{\ie}{\textit{i.e.}}
\newcommand{\eg}{\textit{e.g.}}
\newcommand{\xcheckmark}{\checkmark\kern-1.1ex\raisebox{.7ex}{\rotatebox[origin=c]{125}{--}}}
\newtheorem{lemma}{Lemma}
\newtheorem{corollary}{Corollary}
\def \M {\mathcal{M}}
\def \G {\mathcal{G}}
\tikzset{
  treenode/.style = {align=center, inner sep=0pt, text centered,
    font=\sffamily},
  arn_n/.style = {treenode, circle, white, font=\sffamily\bfseries, draw=black,
    fill=black, text width=1.5em},
  arn_r/.style = {treenode, circle, red, draw=red,
    text width=1.5em, very thick},
  arn_x/.style = {treenode, rectangle, draw=black,
    minimum width=0.5em, minimum height=0.5em}
}
\title{OTOv2: Automatic, Generic, User-Friendly}
\author{Tianyi Chen\thanks{Corresponding Author.}, \ Luming Liang, \ Tianyu Ding, \ Ilya Zharkov\\
\small{Microsoft}\\
\small{Redmond, WA 98052, USA}\\
\footnotesize{\texttt{\{tiachen,lulian,tianyuding,zharkov\}@microsoft.com}} \\\\
\noindent\textbf{Version 2, June 2023.}
\And
Zhihui Zhu \\
\small{The Ohio State of University} \\
\small{Columbus, OH 43210, USA} \\
\small{\texttt{zhu.3440@osu.edu}} \\
}
\begin{document}

\maketitle

\begin{abstract}

The existing model compression methods via structured pruning typically require complicated multi-stage procedures. Each individual stage necessitates numerous engineering efforts and domain-knowledge from the end-users which prevent their wider applications onto broader scenarios.
We propose the second generation of Only-Train-Once (OTOv2), which \textit{first} automatically trains and compresses a general DNN only once from scratch to produce a more compact model with competitive performance without fine-tuning. 
OTOv2 is \textit{automatic} and \textit{pluggable} into various deep learning applications, and requires almost \textit{minimal} engineering efforts from the users. Methodologically, OTOv2 proposes two {major} improvements: \textit{(i)} Autonomy: automatically exploits the dependency of general DNNs, partitions the trainable variables into Zero-Invariant Groups (ZIGs), and constructs the compressed model; and \textit{(ii)} Dual Half-Space Projected Gradient (DHSPG): a novel optimizer to more reliably solve structured-sparsity problems. Numerically, we demonstrate the generality and autonomy of OTOv2 on a variety of model architectures such as VGG, ResNet, CARN, {ConvNeXt}, DenseNet and StackedUnets, the majority of which cannot be handled by other methods without extensive handcrafting efforts.  Together with benchmark datasets including CIFAR10/100, {DIV2K}, Fashion-MNIST, SVNH and ImageNet, its effectiveness is validated by performing competitively or even better than the state-of-the-arts. The source code is available at \url{https://github.com/tianyic/only_train_once}.
\end{abstract}

\section{Introduction}

Large-scale Deep Neural Networks (DNNs) have demonstrated successful in a variety of applications
~\citep{he2016deep}.
However, how to deploy such heavy DNNs onto resource-constrained environments is facing severe challenges. Consequently, in both academy and industry, compressing full DNNs into slimmer ones with negligible performance regression becomes {popular}. Although this area has been explored in the past decade, it is still far away from being fully solved.

Weight pruning is perhaps the most popular compression method because of its generality and ability in achieving significant reduction of FLOPs and model size by identifying and removing redundant structures  \citep{gale2019state, han2015deep,lin2019toward}. 
However, most existing pruning methods typically proceed a complicated multi-stage procedure as shown in Figure~\ref{fig:existing_methods_vs_oto}, which has apparent limitations: \textit{(i)} \textbf{Hand-Craft and User-Hardness}: requires significant engineering efforts and expertise from users to apply the methods onto their own scenarios; \textit{(ii)} \textbf{Expensiveness}: conducts DNN training multiple times including the foremost pre-training, the intermediate training for identifying redundancy and the  afterwards fine-tuning; and \textit{(iii)} \textbf{Low-Generality}: many methods are designed for specific architectures and tasks and need additional efforts to be extended to others. 
\begin{figure}[h]
    \centering
    \begin{minipage}{.4\textwidth}
    \resizebox{\textwidth}{!}{
    \begin{tabular}{c|c|c|c}
    \Xhline{3\arrayrulewidth}
          &  \textbf{OTOv2} & \textbf{OTOv1} & \textbf{Others}\\
    \hline
    \textbf{Training Cost} & \textcolor{red}{\textbf{Low}} & \textcolor{red}{\textbf{Low}} & High \\
    \textbf{Autonomy} & \textcolor{red}{\boldcheckmark} & \xcheckmark & \xmark \\
    \textbf{User-Friendliness} & \textcolor{red}{\boldcheckmark} &  \xmark & \xmark\\
    \textbf{Generality} & \textcolor{red}{\boldcheckmark}  & \textcolor{red}{\boldcheckmark} & \xcheckmark\\
    \hline
    \Xhline{3\arrayrulewidth}
    \end{tabular}}
    \end{minipage}
    \hspace{1mm}
    \begin{minipage}{.57\textwidth}
    \centering
     \includegraphics[width=\linewidth]{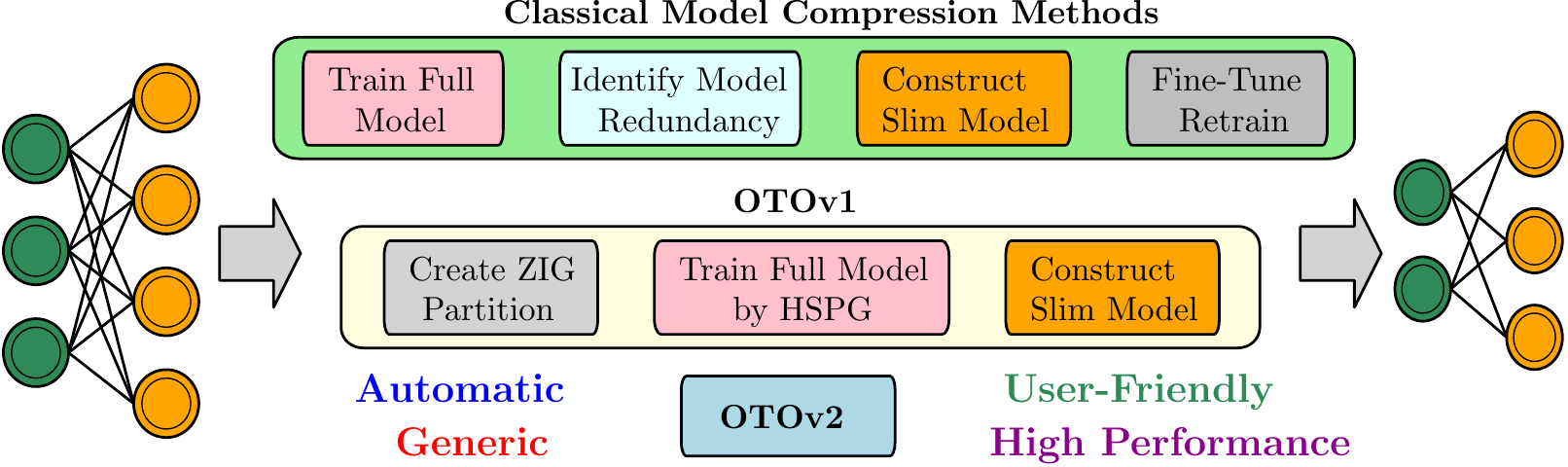}
    \end{minipage}
    \vspace{-2mm}
    \caption{OTOv2 versus existing methods.}
    \label{fig:existing_methods_vs_oto}
\vspace{-5mm}
\end{figure}
\vspace{-4mm}

{To address those drawbacks, we naturally need a DNN training and pruning method to achieve the }   

\textit{\textbf{Goal.} Given a general DNN, automatically train it only once to achieve both high performance and slimmer model architecture simultaneously without pre-training and fine-tuning.}

To realize, the following key problems need to be resolved systematically. \textit{(i)} What are the removal structures ({see \Cref{sec:ZIG} for a formal definition}) of DNNs? \textit{(ii)} How to identify the redundant removal structures? \textit{(iii)} How to effectively remove redundant structures without deteriorating the model performance to avoid extra fine-tuning? \textit{(iv)} How to make all the above proceeding automatically?  Addressing them is challenging in the manner of both algorithmic designs and engineering developments, thereby is not achieved yet by the existing methods to our knowledge.  

To resolve \textit{(i-iii)}, Only-Train-Once (\otovone{})~\citep{chen2021oto} proposed a concept so-called Zero-Invariant Group (ZIG), which is a class of minimal removal structures that can be safely removed without affecting the network output if their parameters are zero. To jointly identify redundant ZIGs and achieve satisfactory performance, \otovone{} further proposed a Half-Space Projected Gradient (\hspg) method to compute a solution with both high performance and group sparsity over ZIGs, wherein zero groups correspond to redundant removal structures.
As a result, \otovone{} trains a full DNN from scratch only once to compute a slimmer counterpart exhibiting competitive performance without fine-tuning, and is perhaps the closest to the {goal} among the existing competitors.

Nevertheless, the fundamental problem \textit{(iv)} is not addressed in \otovone{}, \ie, the ZIGs partition is not automated and only implemented onto several specific architectures. \otovone{} suffers a lot from requiring extensive hand-crafting efforts and domain knowledge to partition trainable variables into ZIGs 
which prohibits its broader usages. Meanwhile, \otovone{} highly depends on HSPG to yield a solution with both satisfactory performance and  high group sparsity. {However, the sparsity exploration of HSPG is typically sensitive to the regularization coefficient thereby requires time-consuming hyper-parameter tuning and lacks capacity to precisely control the ultimate sparsity level}.

\begin{wraptable}{r}{0.35\textwidth}
\vspace{-3mm}
\begin{minipage}{\linewidth}
\begin{Verbatim}[label={\textbf{Library Usage}}]
\textcolor{magenta}{from} only\_train\_once \textcolor{magenta}{import} OTO
\textcolor{Green}{\# General DNN model}
oto = OTO(\textcolor{blue}{model})
optimizer = oto.dhspg()
\textcolor{Green}{\# Train as normal}
optimizer.step()
oto.compress()
\end{Verbatim}
\end{minipage}
\vspace{-3mm}
\end{wraptable}
To overcome the drawbacks of \otovone{} and simultaneously tackle \textit{(i-iv)}, we propose Only-Train-Once v2 (\algacro{}), the next-generation one-shot deep neural network training and pruning framework. Given a full DNN
, \algacro{} is able to train and compress it from scratch into a slimmer DNN with significant FLOPs and parameter quantity reduction. In contrast to others, \algacro{} drastically simplifies the complicated multi-stage procedures; \textbf{guarantees performance} more reliably than~\otovone{}; and is \textbf{generic}, \textbf{automatic} and \textbf{user-friendly}. Our main contributions are summarized as follows.

\begin{itemize}[leftmargin=*]
\item \textbf{Infrastructure for Automated DNN One-Shot Training and Compression.} We propose and develop perhaps the first generic and automated framework to compress a {general} DNN with both excellent performance and substantial complexity reduction in terms of FLOPs and model cardinality. \algacro{} only trains the DNN once, neither pre-training nor fine-tuning is a necessity. 
\algacro{} is user-friendly and easily applied onto generic tasks as shown in \textbf{library usage}. Its success relies on the breakthroughs from both algorithmic designs and infrastructure developments. 

\item \textbf{Automated ZIG Partition and Automated Compressed Model Construction.} We propose a novel graph algorithm to automatically exploit and partition the variables of a general DNN into Zero-Invariant Groups (ZIGs), \ie, 
{the minimal groups of parameters that need to be pruned together}. We further propose a novel algorithm to automatically construct the compressed model by the hierarchy of DNN and eliminating the structures corresponding to ZIGs as zero. Both algorithms are dedicately designed, and work effectively with low time and space complexity. 

\item \textbf{Novel Structured-Sparsity Optimization Algorithm.} We propose a novel optimization algorithm, called Dual Half-Space Projected Gradient (DHSPG), to train a general DNN once from scratch to effectively achieve competitive performance and high group sparsity in the manner of ZIGs, which solution is further leveraged into the above automated compression. \dualhspg{} formulizes a constrained sparse optimization problem and solves it by constituting a direction within the intersection of dual half-spaces to largely ensure the progress to both the objective convergence and the identification of redundant groups. \dualhspg{} outperforms the \hspg{} in \otovone{} in terms of enlarging search space, fewer hyper-parameter tuning, and {more reliably controlling sparsity}.

\item \textbf{Experimental Results.} We apply \algacro{} onto a variety of DNNs  (most of which have structures with complicated connectivity) and extensive benchmark datasets, including CIFAR10/100, {DIV2K}, \svnh{}, \fashionmnist{} and \imagenet{}. 
\algacro{} trains and compresses various DNNs simultaneously from scratch without fine-tuning for significant inference speedup and parameter reduction, and achieves competitive or even state-of-the-art results on compression benchmarks.
\end{itemize}
\vspace{-3mm}


\section{Related Work}

\paragraph{Structured Pruning.} To compute compact architectures for efficient model inference and storage, structured pruning identifies and prunes the redundant structures in a full model ~\citep{gale2019state, han2015deep}. The general procedure can be largely summarized as: (\textit{i}) train a full model; (\textit{ii}) identify and remove the redundant structures to construct a slimmer DNN based on various criteria, including (structured) sparsity~\citep{lin2019toward,wen2016learning,li2020group,zhuang2020neuron,chen2017reduced,chen2018farsa,chen2021orthant,chen2020neural,gao2020highly,zhuang2020neuron,meng2020pruning,yang2019deephoyer}, Bayesian pruning~\citep{zhou2019accelerate,louizos2017bayesian,van2020bayesian}, ranking importance~\citep{li2020eagleeye,luo2017thinet,hu2016network,he2018soft,li2019exploiting,zhang2018systematic}, reinforcement learning~\citep{he2018amc,chen2019storage},~lottery ticket~\citep{frankle2018lottery,frankle2019stabilizing,renda2020comparing}, etc.; (\textit{iii}) (iteratively) retrain the pruned model to regain the accuracy regression during pruning. These methods have to conduct a complicated and time-consuming procedure to trains the DNN multiple times and requires a good deal of
domain knowledge to manually proceed every individual step. OTOv1~\citep{chen2021oto} is recently proposed to avoid fine-tuning and end-to-end train and compress the DNN once, whereas its automation relies on spending numerous handcrafting efforts on creating ZIGs partition and slimmer model construction for specific target DNNs in advance, thereby is actually semi-automated. 
\vspace{-2mm}

\paragraph{Automated Machine Learning (AutoML).} \algacro{} fills into a vital gap within AutoML domain regarding given an general DNN architecture, how to automatically train and compress it into a slimmer one with competitive performance and significant FLOPs and parameter quantity reduction. The existing AutoML methods focus on \textit{(i)} automated feature engineering~\citep{kanter2015deep}, \textit{(ii)} automated hyper-parameter setting~\citep{klein2017fast}, and \textit{(iii)} neural architecture search (NAS)~\citep{elsken2018efficient}. NAS searches a DNN architecture with satisfactory performance from a prescribed fixed full graph wherein the connection between two nodes (tensors) is searched from a pool of prescribed operators. NAS itself has no capability to slim and remove redundancy from the searched architectures due to the pool being fixed and is typically time-consuming. As a result, NAS may serve as a prerequisite step to search a target network architecture as the input to \algacro{}. We also propose OTOv3~\citep{chen2023otov3} to realize automated NAS within general super-networks.
\vspace{-2mm}

\section{\algacro{}}

\algacro{} has nearly reached the \emph{goal} of model compression via weight pruning, which is outlined in Algorithm~\ref{alg:main.outline}. In general, given a  neural network $\mathcal{M}$ to be trained and compressed, OTOv2 first automatically figures out the dependencies among the vertices to exploit minimal removal structures and partitions the trainable variables into Zero-Invariant Groups (ZIGs) (Algorithm~\ref{alg:main.zig_partition}).  ZIGs $(\mathcal{G})$ are then fed into a structured sparsity optimization problem, which is solved by a Dual Half-Space Projected Gradient (\dualhspg{}) method to yield a solution $\bm{x}^*_{\text{\dualhspg}}$ with competitive performance as well as high group sparsity in the view of ZIGs (Algorithm~\ref{alg:main.dhspg}). The compressed model $\mathcal{M}^*$ is ultimately constructed via removing the redundant structures corresponding to the ZIGs being zero. $\mathcal{M}^*$ significantly accelerates the inference in both time and space complexities and returns the identical outputs to the full model $\mathcal{M}$ parameterized as $\bm{x}^*_{\text{\dualhspg}}$ due to the properties of ZIGs, thus avoids further fine-tuning $\mathcal{M}^*$. The whole procedure is proceeded automatically and easily employed onto various DNN applications and consumes almost minimal engineering efforts from the users. 
\vspace{-2mm}

\begin{algorithm}[h!]
	\caption{Outline of \algacro{}.}
	\label{alg:main.outline}
	\begin{algorithmic}[1]
		\State \textbf{Input.} An arbitrary full model $\M$ to be trained and compressed (no need to be pretrained). 
		\State \textbf{Automated ZIG Partition.} Partition the trainable parameters of $\mathcal{M}$ into $\G$.
		\State \textbf{Train ${\mathcal{M}}$ by \dualhspg{}.} Seek a highly group-sparse solution $\bm{x}^*_{\text{\dualhspg{}}}$ with high performance.
		\State \textbf{Automated Compressed Model $\mathcal{M}^*$ Construction.} Construct a slimmer model upon $\bm{x}^*_{\text{\dualhspg}}$.
		\State \textbf{Output:} Compressed slimmed model $\M^*$.
 \end{algorithmic}
\end{algorithm}

\subsection{Automated ZIG Partition}
\label{sec:ZIG}
\vspace{-1mm}

\textbf{Background.} We review relevant concepts before describing how to proceed ZIG partition automatically.
{Due to the complicated connectivity of DNNs, removing an arbitrary structure or component may result in an invalid DNN. We say a structure  \textit{removal} if and only if the DNN without this component still serves as a valid DNN. Consequently, a removal structure is called \textit{minimal} if and only if it can not be further decomposed into multiple removal structures. A particular class of minimal removal structures---that produces zero outputs to the following layer if their parameters being zero---are called
ZIGs~\citep{chen2021oto} which can be removed directly without affecting the network output. Thus, each ZIG consists of a minimal group of variables that need to be pruned together and dominates most DNN structures, \eg, layers as \texttt{Conv}, \texttt{Linear} and  \texttt{MultiHeadAtten}.
While ZIGs exist for general DNNs, their topology can vary significantly due to the complicated connectivity. This together with the lack of API poses severe challenges to automatically exploit ZIGs in terms of both algorithmic designs and engineering developments. }

\begin{algorithm}[h]
	\caption{Automated Zero-Invariant Group Partition.}
	\label{alg:main.zig_partition}
	\begin{algorithmic}[1]
		\State \textbf{Input:} A DNN $\mathcal{M}$ to be trained and compressed. 
		\State Construct the trace graph $(\mathcal{E}, \mathcal{V})$ of $\mathcal{M}$.\label{line:vertices_edges_trace_graph}
		\State Find connected components $\mathcal{C}$ over all accessory, shape-dependent joint and unknown vertices.\label{line:cc_non_stem_vertices}
		\State Grow $\mathcal{C}$ till incoming nodes are either stem or shape-independent joint vertices.
		\label{line:grow_cc}
		\State Merge connected components in $\mathcal{C}$ if any intersection.\label{line:merge_cc}
		\State Group pairwise parameters of stem vertices in the same connected component associated with parameters from affiliated accessory vertices
		if any as one ZIG into $\mathcal{G}$.
		\State \textbf{Return} the zero-invariant groups $\mathcal{G}$.
	\end{algorithmic}
\end{algorithm}

\textbf{Algorithmic Outline.} To resolve the autonomy of ZIG partition, we present a novel, effective and efficient algorithm. As outlined in Algorithm~\ref{alg:main.zig_partition}, the algorithm essentially partitions the graph of DNN into a set of connected components of dependency, then groups the variables based on the affiliations among the connected components.  For more intuitive illustration, we provide a small but complicated DemoNet along with explanations about its ground truth minimal removal structures (ZIGs) in Figure~\ref{fig:automatic_zig_illustration}. 
We now elaborate Algorithm~\ref{alg:main.zig_partition} to automatically recover the ground truth ZIGs.   

\contour{black}{Graph Construction.} In particular, we first establish the trace graph $(\mathcal{E}, \mathcal{V})$ of the target DNN, wherein each vertex in $\mathcal{V}$ refers to a specific operator, and the edges in $\mathcal{E}$ describe how they connect (line~\ref{line:vertices_edges_trace_graph} of Algorithm~\ref{alg:main.zig_partition}). We categorize the vertices into stem, joint, accessory or unknown. Stem vertices equip with trainable parameters and have capacity to transform their input tensors into other shapes, \eg, \verb|Conv| and \verb|Linear|. Joint vertices aggregate multiple input tensors into a single output such as \verb|Add|, \verb|Mul| and \verb|Concat|. Accessory vertices operate a single input tensor into a single output and may possess trainable parameters such as \verb|BatchNorm| and \verb|ReLu|. The remaining unknown vertices proceed some uncertain operations. Apparently, stem vertices compose most of the DNN parameters. Joint vertices establish the connections cross different vertices, 
and thus dramatically bring hierarchy and intricacy of DNN. To keep the validness of the joint vertices, the minimal removal structures should be carefully constructed.
Furthermore, we call joint vertices being input shape dependent (SD) if requiring inputs in the same shapes such as $\verb|Add|$, otherwise being shape-independent (SID) such as $\verb|Concat|$ along the channel dimension for \verb|Conv| layers as input.


\contour{black}{Construct Connected Components of Dependency.} Now, we need to figure out the exhibiting dependency across the vertices to seek the minimal removal structures of the target DNN. To proceed, we first connect accessory, SD joint and unknown vertices together if adjacent to form a set of  connected components $\mathcal{C}$ (see Figure~\ref{fig:cc_non_stem_vertices} and line~\ref{line:cc_non_stem_vertices} of Algorithm~\ref{alg:main.zig_partition}). This step is to establish the skeletons for finding vertices that depend on each other when considering removing hidden structures. The underlying intuitions of this step in depth are \textit{(i)} the adjacent accessory vertices operate and are subject to the same ancestral stem vertices if any; \textit{(ii)} SD joint vertices force their ancestral stem vertices dependent on each other to yield tensors in the same shapes; and \textit{(iii)} unknown vertices introduce uncertainty, hence finding potential affected vertices is necessary. We then grow $\mathcal{C}$ till all their incoming vertices are either stem or SID joint vertices and merge the connected components if any intersection as line~\ref{line:grow_cc}-\ref{line:merge_cc}. Remark here that the newly added stem vertices are affiliated by the accessory vertices, such as \texttt{Conv1} for \texttt{BN1-ReLu} and \texttt{Conv3+Conv2} for \texttt{BN2$\mid$BN3} in Figure~\ref{fig:cc_grown_merged}. In addition, the SID joint vertices introduce dependency between their affiliated accessory vertices and incoming connected components, \eg, \texttt{Concat-BN4} depends on both \texttt{Conv1-BN1-ReLu} and \texttt{Conv3+Conv2-BN2$\mid$BN3} since \texttt{BN4} normalizes their concatenated tensors along channel. 

\begin{figure}[t]
    \centering
    \begin{subfigure}{0.49\linewidth}
    	\centering
    	\includegraphics[width=0.97\linewidth]{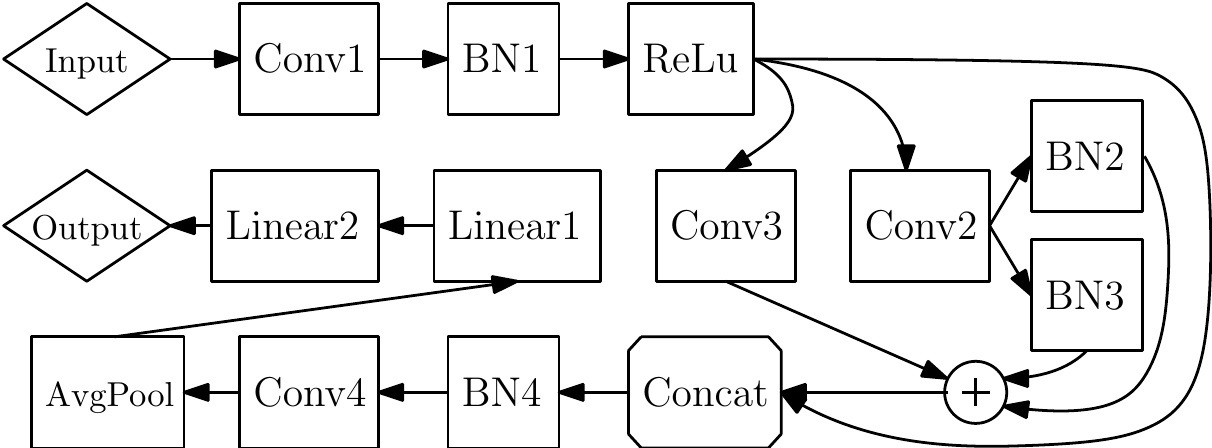}
    	\caption{DNN to be trained and compressed.}\label{fig:residual_block_zig}
    	\label{fig:1a}		
    \end{subfigure}
    \begin{subfigure}{0.49\linewidth}
    	\centering
    	\includegraphics[width=0.97\linewidth]{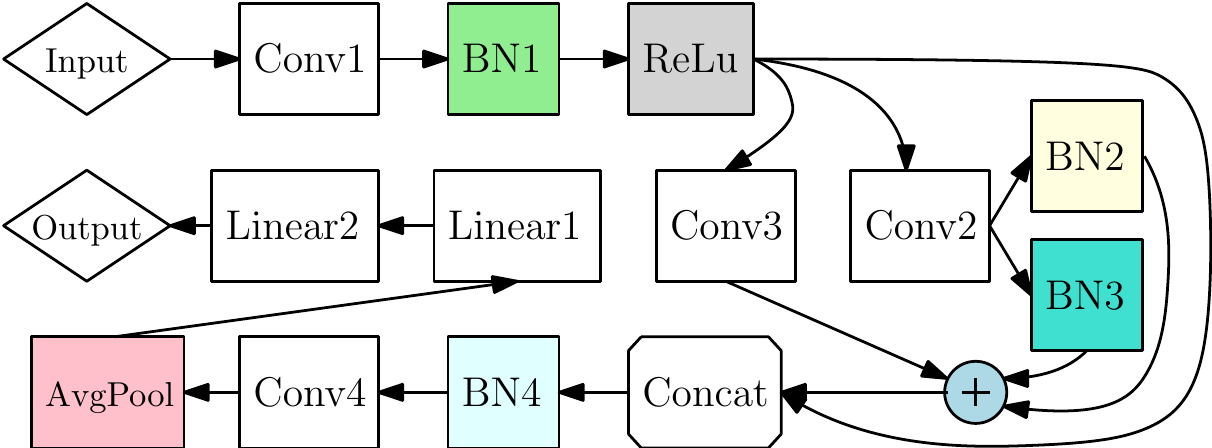}
    	\caption{Accessory and shape-dependent vertices.}\label{fig:non_stem_vertices}
    	\label{fig:1a}		
    \end{subfigure}
    \begin{subfigure}{0.49\linewidth}
    	\centering
    	\includegraphics[width=0.97\linewidth]{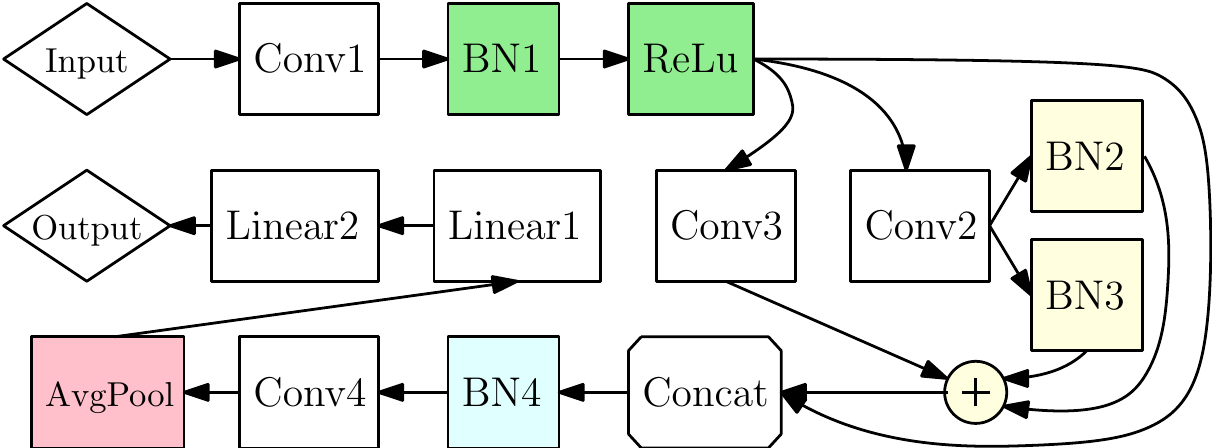}
    	\caption{Connected components.}\label{fig:cc_non_stem_vertices}
    	\label{fig:1a}		
    \end{subfigure}
    \begin{subfigure}{0.49\linewidth}
    	\centering
    	\includegraphics[width=0.97\linewidth]{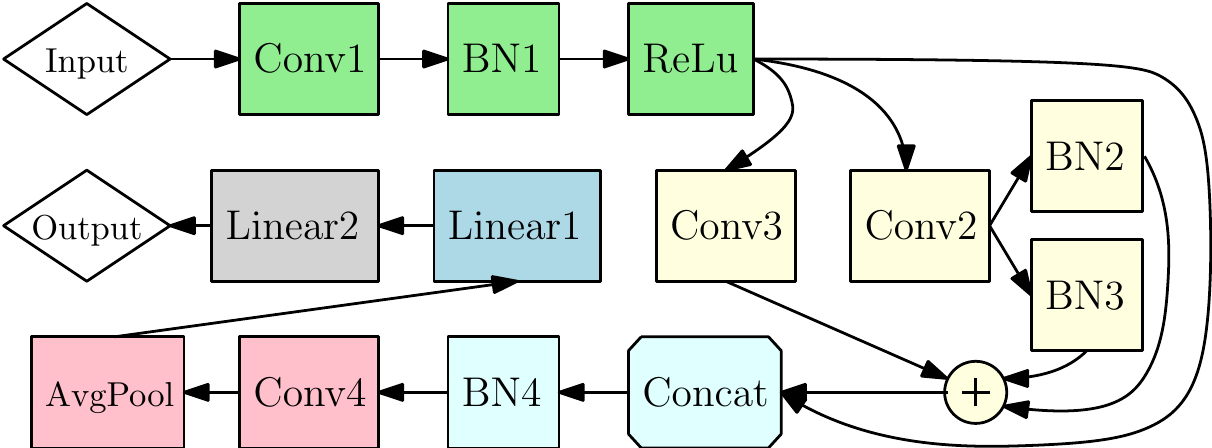}
    	\caption{Connected components of dependency.}\label{fig:cc_grown_merged}
    	\label{fig:1a}		
    \end{subfigure}
    \begin{subfigure}{0.98\linewidth}
    	\centering
    	\includegraphics[width=0.97\linewidth]{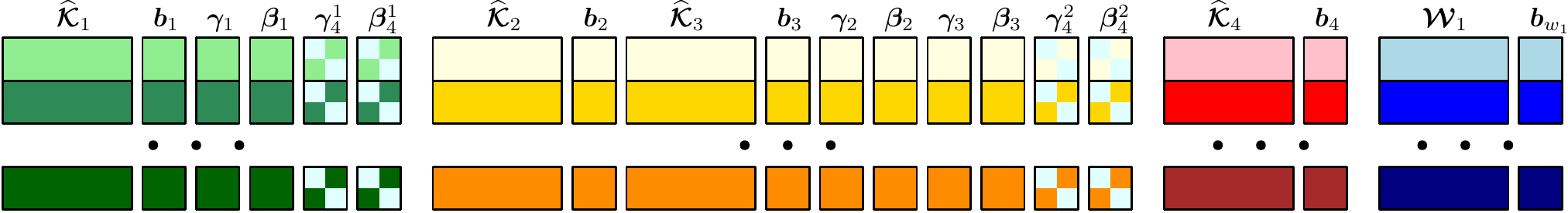}
    	\caption{Zero-Invariant Groups.}\label{fig:partitioned_zigs}
    	\vspace{-3mm}
    	\label{fig:1a}		
    \end{subfigure}
    \caption{Automated ZIG partition illustration. $\widehat{\bm{\mathcal{K}}}_i$ and $\bm{b}_i$ are the flatten filter matrix and bias vector of \texttt{Conv$i$}, where the $j$th row of $\widehat{\bm{\mathcal{K}}}_i$ represents the $j$th 3D filter. $\bm{\gamma}_i$ and $\bm{\beta}_i$ are the  weighting and bias vectors of \texttt{BN$i$}. $\bm{\mathcal{W}}_i$ and $\bm{b}_{w_i}$ are the weighting matrix and bias vector for \texttt{Linear$i$}.
    The ground truth ZIGs $\mathcal{G}$ are present in Figure~\ref{fig:partitioned_zigs}. Since the output tensors of \texttt{Conv2} and \texttt{Conv3} are added together, both layers associated with the subsequent \texttt{BN2} and \texttt{BN3} must remove the same number of filters from $\widehat{\bm{\mathcal{K}}}_2$ and $\widehat{\bm{\mathcal{K}}}_3$ and scalars from $\bm{b}_2, \bm{b}_3, \bm{\gamma}_2, \bm{\gamma}_3, \bm{\beta}_2$ and $\bm{\beta}_3$ to keep the addition valid. Since \texttt{BN4} normalizes the concatenated outputs along channel from \texttt{Conv1-BN1-ReLu} and \texttt{Conv3+Conv2-BN2$\mid$BN3}, the corresponding scalars in $\bm{\gamma}_4, \bm{\beta}_4$ need to be removed simultaneously.
    \vspace{-9.5mm}
    }
    \label{fig:automatic_zig_illustration}
\end{figure}

\contour{black}{Form ZIGs.} Finally, we form ZIGs based on the connected components of dependency as Figure~\ref{fig:cc_grown_merged}. The pairwise trainable parameters across all individual stem vertices in the same connected component need to  be first grouped together as Figure~\ref{fig:partitioned_zigs}, wherein the parameters of the same color represent one group. Later on, the accessory vertices insert their trainable parameters if applicable into the groups of their dependent stem vertices accordingly. Some  accessory vertex such as \texttt{BN4} may depend on multiple groups because of the SID joint vertex, thereby its trainable parameters $\bm{\gamma}_4$ and $\bm{\beta}_4$ need to be partitioned and separately added into corresponding groups, \eg, $\bm{\gamma}_4^1, \bm{\beta}_4^1$ and $\bm{\gamma}_4^2, \bm{\beta}_4^2$. In addition, the connected components that are adjacent to the output of DNN are excluded from forming ZIGs since the output shape should be fixed such as \texttt{Linear2}. For safety, the connected components that possess unknown vertices are excluded as well due to uncertainty, which further guarantees the generality of the framework applying onto DNNs with customized operators.\vspace{-1mm}

\textbf{Complexity Analysis.} The proposed automated ZIG partition Algorithm~\ref{alg:main.zig_partition} is a series of customized graph algorithms dedicately composed together. 
In depth, every individual sub-algorithm is achieved by depth-first-search recursively traversing the trace graph of DNN and conducting step-specific operations, which has time complexity as $\mathcal{O}(|\mathcal{V}|+|\mathcal{E}|)$ and space complexity as $\mathcal{O}(|\mathcal{V}|)$ in the worst case. The former one is computed by discovering all neighbors of each vertex by traversing the adjacency list once in linear time. The latter one is because the trace graph of DNN is acyclic thereby the memory cache consumption is up to the length of possible longest path for an acyclic graph as $|\mathcal{V}|$. Therefore, automated ZIG partition can be efficiently completed within linear time. \vspace{-1.5mm}

\subsection{Dual Half-Space Projected Gradient (DHSPG)}\vspace{-1mm}

Given the constructed ZIGs $\mathcal{G}$ by Algorithm~\ref{alg:main.zig_partition}, the next step is to jointly identify which groups are redundant to be removed and train the remaining groups to achieve high performance. {To tackle it, we construct a structured sparsity optimization problem and solve it via a novel DHSPG. Compared with HSPG, DHSPG constitutes a dual-half-space direction with automatically selected regularization coefficients to more reliably control the sparsity exploration, and enlarges the search space by partitioning the ZIGs into separate sets to avoid trapping around the origin for better generalization.}

\textbf{Target Problem.} Structured sparsity inducing optimization problem is a natural choice to seek a group sparse solution with high performance, wherein the zero groups refer to the redundant structures, and the non-zero groups exhibit the prediction power to maintain competitive performance to the full model. We formulate an optimization problem with a group sparsity constraint in the form of ZIGs $\mathcal{G}$ as~(\ref{prob.main}) and propose a novel Dual Half-Space Projected Gradient (DHSPG) to solve it. 
\vspace{-0.3mm}
\begin{equation}\label{prob.main}
\minimize{\bm{x}\in \mathbb{R}^n}\ f(\bm{x}),\ \ \text{s.t.} \ \text{Card}\{g\in\mathcal{G}| [\bm{x}]_g=0\}= K,
\vspace{-2mm}
\end{equation}
\vspace{-0.3mm}
\noindent 
where $K$ is the target group sparsity level. Larger $K$ indicates higher group sparsity in the solution and typically results in more aggressive FLOPs and parameter quantity reductions. 

\begin{wrapfigure}{r}{0.3\textwidth}
\vspace{-5mm}
\includegraphics[width=0.95\linewidth]{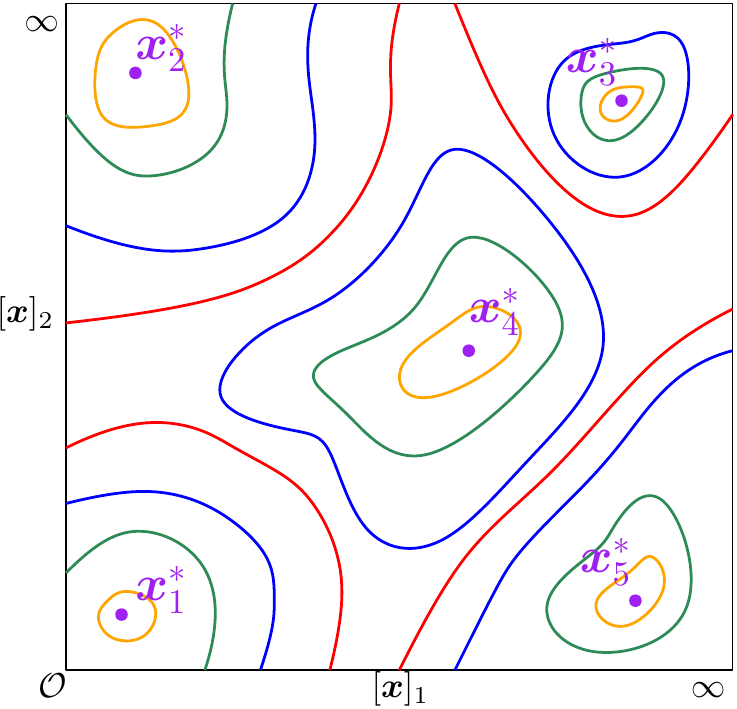}
\vspace{-3mm}
\caption{Local optima $\bm{x}^*\in\mathbb{R}^2$ distribution over the objective landscape. }
\label{fig:search_space_drawbacks}
\end{wrapfigure}

\textbf{Related Optimizers and Limitations.} To solve such constrained problem, ADMM converts it into a min-max problem, but can not tackle the non-smooth and non-convex hard constraint of sparsity without hurting the objective, thus necessitates extra fine-tuning afterwards~\citep{lin2019toward}. HSPG in OTOv1~\citep{chen2021oto}, AdaHSPG+~\citep{dai2023an} and proximal methods~\citep{xiao2014proximal} relax it into a non-constrained mixed $\ell_1/\ell_p$ regularization problem, but can not guarantee the sparsity constraint {because of the implicit relationship between the regularization coefficient and the sparsity level.} 
In addition, the augmented regularizer penalizes the magnitude of the entire trainable variables which restricts the search space to converge to the local optima nearby the origin point, \eg, $\bm{x}_1^*$ in Figure~\ref{fig:search_space_drawbacks}. However, the local optima with the highest generalization may locate variably for different applications, and some may stay away from the origin point, \eg, $\bm{x}_2^*,\cdots,\bm{x}_5^*$ in Figure~\ref{fig:search_space_drawbacks}.

\begin{wrapfigure}{r}{0.6\textwidth}
\vspace{-8mm}
\begin{minipage}{\linewidth}
\begin{algorithm}[H]
\caption{Dual Half-Space Projected Gradient (DHSPG)}
\label{alg:main.dhspg}
\begin{algorithmic}[1]
\State \textbf{Input:} initial variable $\bm{x}_0\in\mathbb{R}^n$, initial learning rate $\alpha_0$, warm-up steps $T_w$, half-space project steps $T_h$, target group sparsity $K$ and ZIGs $\mathcal{G}$.
\State Warm up $T_w$ steps via stochastic gradient descent.\label{line:warm_up}
\State Construct $\mathcal{G}_{p}$ and $\mathcal{G}_{np}$ given $\mathcal{G}$ and $K$ as~(\ref{eq:g_p}).\label{line:partition_groups}
\For {$t=T_w, T_w+1,T_w+2,\cdots, $}
	\State Compute gradient estimate $\Grad f(\bm{x}_t)$ or its variant.\label{line:compute_gradient_estimate}
	\State Update $[\bm{x}_{t+1}]_{\mathcal{G}_{np}}$ as 
	$[\bm{x}_t-\alpha_t \Grad f(\bm{x}_t)]_{\mathcal{G}_{np}}.
	$
	\State Select proper $\lambda_g$ for $g\in\mathcal{G}_{p}$.
	\State Compute $[\tilde{\bm{x}}_{t+1}]_{\mathcal{G}_{p}}$ via subgradient descent of $\psi$.
	\label{line:compute_trial_iterate}
	\If{$t\geq T_h$}\label{line:half_space_start}
	    \State Perform Half-Space projection over $[\tilde{\bm{x}}_{t+1}]_{\mathcal{G}_{p}}$.
	\label{line:half_space_projection}
	\EndIf
	\State Update $[\bm{x}_{t+1}]_{\mathcal{G}_p}\gets [\tilde{\bm{x}}_{t+1}]_{\mathcal{G}_p}$.
	\State Update $\alpha_{t+1}$.
\EndFor
\State \textbf{Return} the final iterate $\bm{x}^*_{\dualhspg}$.
\end{algorithmic}
\end{algorithm}
\end{minipage}
\vspace{-5mm}
\end{wrapfigure}

\textbf{Algorithm Outline for DHSPG.} To resolve the drawbacks of the existing optimization algorithms for solving (\ref{prob.main}), we propose a novel algorithm, named Dual Half-Space Projected Gradient (DHSPG), stated as Algorithm~\ref{alg:main.dhspg}, with two takeaways. 

\contour{black}{Partition Groups.} To avoid always trapping in the local optima near the origin point, we further partition the groups in $\mathcal{G}$ into two subsets: one has magnitudes of variables being penalized $\mathcal{G}_{p}$, and the other does not force to penalize variable magnitude $\mathcal{G}_{np}$.
Different criteria can be applied here to construct the above partition based on salience scores, \eg, cosine-similarity $\cos{(\theta_g)}$ between the projection direction $-[\bm{x}]_g$ and the negative gradient or its estimation $-[\Grad f(\bm{x})]_g$. Higher cos-similarity over $g\in\mathcal{G}$ indicates that projecting the group of variables in $g$ onto zeros is more likely to make progress to the optimality of $f$ (considering the descent direction from the perspective of optimization). The magnitude over $[\bm{x}]_g$ then needs to be penalized. Therefore, we compute $\mathcal{G}_{p}$ by picking up the ZIGs with top-$K$ highest salience scores and $\mathcal{G}_{np}$ as its complementary as~(\ref{eq:g_p}). To compute more reliable scores, the partition is proceeded after performing $T_w$ warm-up steps as line~\ref{line:warm_up}-\ref{line:partition_groups}.
\vspace{-0mm}
\begin{equation}\label{eq:g_p}
\mathcal{G}_{p}=(\text{Top-}K)\argmax_{g\in\mathcal{G}}\text{salience-score}(g)\ \text{and}\ \mathcal{G}_{np}=\{1, 2, \cdots, n\}\backslash \mathcal{G}_{p}.
\vspace{-2mm}
\end{equation}

\contour{black}{Update Variables.} For the variables in $\mathcal{G}_{np}$ of which magnitudes are not penalized, we proceed vanilla stochastic gradient descent or its variants, such as Adam~\citep{kingma2014adam}, \ie, $[\bm{x}_{t+1}]_{\mathcal{G}_{np}}\gets [\bm{x}_t]_{\mathcal{G}_{np}}-\alpha_t [\Grad f(\bm{x}_t)]_{\mathcal{G}_{np}}$. 
For the groups of variables in $\mathcal{G}_{p}$ to penalize magnitude, we seek to find out redundant groups as zero, but instead of directly projecting them onto zero as ADMM which easily destroys the progress to the optimum, we formulate a relaxed non-constrained subproblem as (\ref{prob.sub_penaly_group}) to gradually reduce the magnitudes without deteriorating the objective and project groups onto zeros if the projection serves as a descent direction during the training process.
\vspace{-1mm}
\begin{equation}\label{prob.sub_penaly_group}
\minimize{[\bm{x}]_{\mathcal{G}_{p}}}\ \psi([\bm{x}]_{\mathcal{G}_p}):=f\left([\bm{x}]_{\mathcal{G}_{p}}\right)+\sum_{g\in\mathcal{G}_{p}}\lambda_g \norm{[\bm{x}]_{g}}_2,
\end{equation}
\vspace{-4mm}
\begin{wrapfigure}{r}{0.4\textwidth}
\vspace{-2mm}
\includegraphics[width=0.9\linewidth]{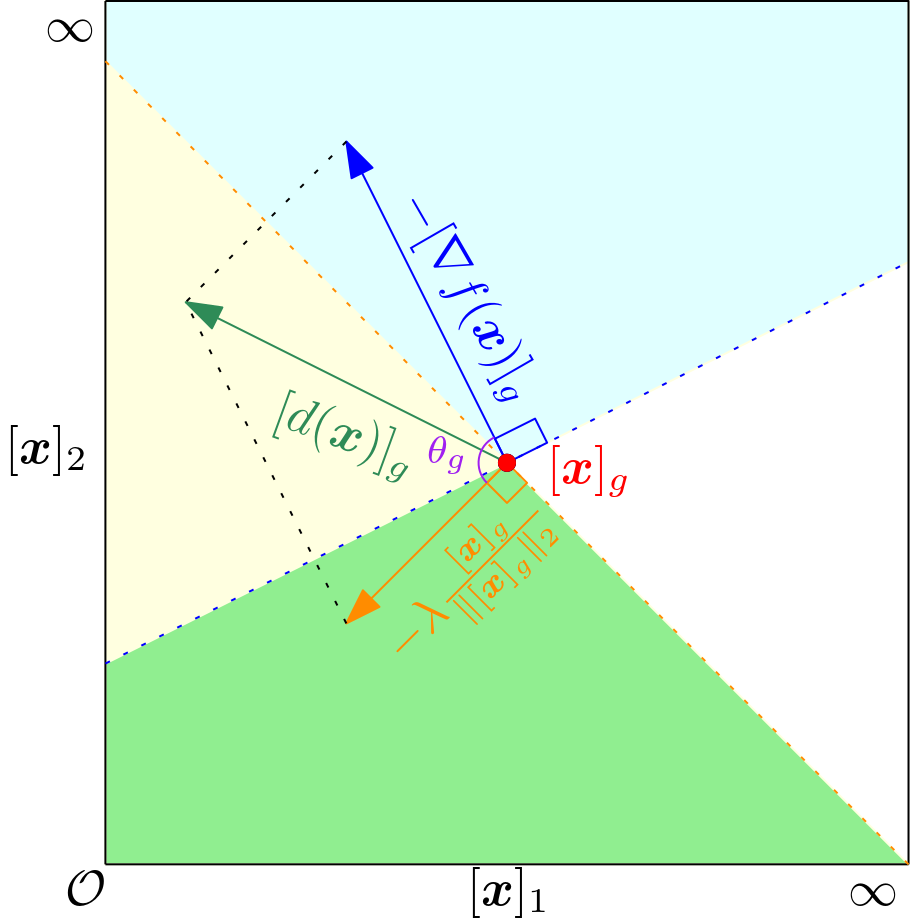}
\vspace{-3mm}
\caption{ Search direction in DHSPG.}
\vspace{-9mm}
\label{fig:search_direction_dhspg}
\end{wrapfigure}
where $\lambda_g$ is a group-specific regularization coefficient and needs to be dedicately chosen to guarantee the decrease of both the variable magnitude for $g$ as well as the objective $f$. In particular, we compute a negative subgradient of $\psi$ as the search direction $[\bm{d}(\bm{x})]_{\mathcal{G}_p}:=-[\Grad f(\bm{x})]_{\mathcal{G}_p}-\sum_{g\in\mathcal{G}_p}\lambda_g{[\bm{x}]_g/\max\{\norm{[\bm{x}]_g}_2}, \tau\}$ with $\tau$ as a safeguard constant. 
To ensure $[\bm{d}(\bm{x})]_{\mathcal{G}_p}$ as a descent direction for both $f$ and $\norm{\bm{x}}_2$, $[\bm{d}(\bm{x})]_{g}$ needs to fall into the intersection between the \textit{dual half-spaces} with normal directions as $-[\Grad f]_g$ and $-[\bm{x}]_g$ for any $g\in\mathcal{G}_p$ as shown in Figure~\ref{fig:search_direction_dhspg}. In other words, $[\bm{d}(\bm{x})]_{\mathcal{G}_p}^\top [-\Grad f(\bm{x})]_{\mathcal{G}_p}$ and $[\bm{d}(\bm{x})]_{\mathcal{G}_p}^\top [-\bm{x}]_{\mathcal{G}_p}$ are greater than 0.
It further indicates that $\lambda_g$ locates in the interval 
$(\lambda_{\text{min},g}, \lambda_{\text{max},g}):= \left(-\cos(\theta_g)\norm{[\Grad f(\bm{x})]_g}_2, -\frac{\norm{[\Grad f(\bm{x})]_g}_2}{\cos{(\theta_g)}}\right)$
if $\cos{(\theta_g)}<0$ otherwise can be an arbitrary positive constant. Such $\lambda_g$ brings the decrease of both the objective and the variable magnitude. We then compute a trial iterate $[\tilde{\bm{x}}_{t+1}]_{\mathcal{G}_{p}}\gets[\bm{x}_t-\alpha_t\bm{d}(\bm{x}_t)]_{\mathcal{G}_p}$ via the subgradient descent of $\psi$ as line~\ref{line:compute_trial_iterate}. The trial iterate is fed into the Half-Space projector \citep{chen2021oto} which outperforms proximal operators to yield group sparsity more productively without hurting the objective as line~\ref{line:half_space_start}-\ref{line:half_space_projection}.  Remark here that OTOv1 utilizes a global coefficient $\lambda$ for all groups, thus lacks sufficient capability to guarantee both aspects for each individual group in accordance.
\vspace{-1mm}

\textbf{Convergence and Complexity Analysis.} \dualhspg{} converges to the solution of (\ref{prob.main}) $\bm{x}^*_{\dualhspg}$ in the manner of both theory and practice. In fact, the theoretical convergence relies on the the construction of dual half-space mechanisms which yield sufficient decrease for both objective $f$ and variable magnitude, see Lemma~\ref{lemma.sufficient_decrease_f} and Corollary~\ref{corollary:magnitude_decrease} in Appendix~\ref{appendix:convergence}. Together with the sparsity recovery of Half-Space projector \citep[Theorem 2]{chen2020half}, DHSPG  effectively computes a solution with desired group sparsity. In addition, DHSPG consumes the same time complexity $\mathcal{O}(n)$ as other first-order methods, such as SGD and Adam, since all operations can be finished within linear time. 
\vspace{-5mm}


\subsection{Automated Compressed Model Construction}\label{sec:automated_compression}

\begin{wrapfigure}{r}{0.55\textwidth}
\begin{minipage}{0.98\linewidth}
\vspace{-3mm}
\includegraphics[width=0.9\linewidth]{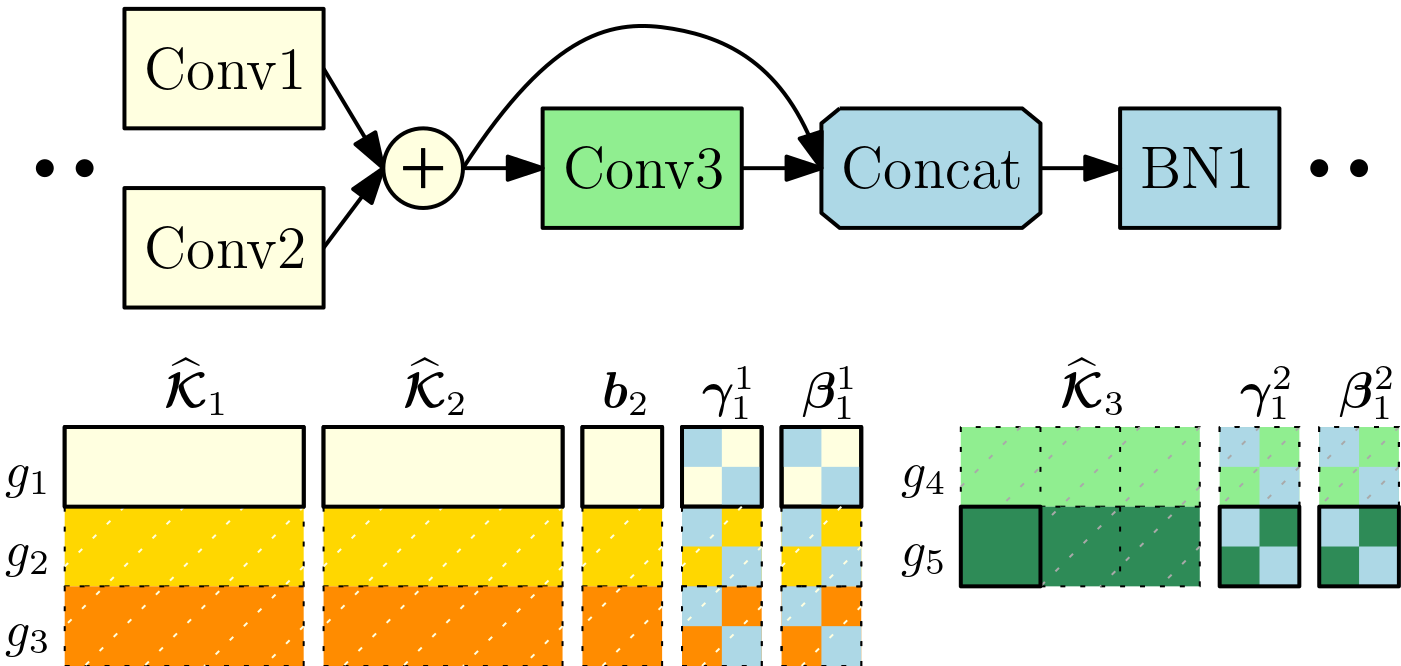}
\caption{Automated compressed model construction. $\mathcal{G}=\{g_1, g_2, \cdots, g_5\}$ and $[\bm{x_{\dualhspg{}}}^*]_{g_2\cup g_3\cup g_4}=\bm{0}$.}
\label{fig:compressed_model_construction}
\end{minipage}
\vspace{-5mm}
\end{wrapfigure}
In the end, given the solution $\bm{x}^*_{\dualhspg{}}$ with both high performance and group sparsity, we now automatically construct a compact model which is a \textit{manual} step with unavoidable substantial engineering efforts in OTOv1. 
In general, we traverse all vertices with trainable parameters, then remove the structures in accordance with ZIGs being zero, such as the dotted rows of $\widehat{\bm{\mathcal{K}}}_1, \widehat{\bm{\mathcal{K}}}_2, \widehat{\bm{\mathcal{K}}}_3$ and scalars of $\bm{b}_2, \bm{\gamma}_1, \bm{\beta}_1$ as illustrated in Figure~\ref{fig:compressed_model_construction}. Next, we erase the redundant parameters that affiliate with the removed structures of their incoming stem vertices to keep the operations valid, \eg, the second and third channels in $g_5$ are removed though $g_5$ is not zero. The automated algorithm is promptly complete in linear time via performing two passes of depth-first-search and manipulating parameters to produce a more compact model $\mathcal{M}^*$. Based on the property of ZIGs, $\mathcal{M}^*$ returns the same inference outputs as the full $\mathcal{M}$ parameterized as $\bm{x}^*_{\dualhspg{}}$ thus no further fine-tuning is necessary.

\section{Numerical Experiments}
We develop \algacro{} to train and compress DNNs into slimmer networks with significant inference speedup and storage saving without fine-tuning. The library implementation details are presented in Appendix \ref{appendix:implementation_details}. To demonstrate its effectiveness, we first verify the correctness of automated ZIG partition and automated compact model construction by employing~\algacro{} onto a variety of DNNs with complicated structures (see the visualizations in Appendix~\ref{appendix.zig_illustration}). 
Then, we compare \algacro{} with other methods on the benchmark experiments to show its competitive (or even superior) performance. In addition, we conduct ablation studies of DHSPG versus HSPG on the popular super-resolution task {and Bert~\citep{NIPS2017_3f5ee243} on Squad~\citep{rajpurkar2016squad}} in Appendix~\ref{appendix:ablation_study}. 
Together with autonomy, user-friendliness and generality, \algacro{} arguably becomes the new state-of-the-art.

\textbf{Implementation Details.} We conducted the experiments on one NVIDIA A100 GPU. For the experiments in the main body, we estimated the gradient via sampling a mini-batch of data points under first-order momentum with coefficient as 0.9. The mini-batch sizes follow other related works from $\{64, 128, 256\}$. All experiments in the main body share the same commonly-used learning rate scheduler that start from $10^{-1}$ and periodically decay by 10 till $10^{-4}$ every $T_{period}$ epochs. The decaying period $T_{period}$ depends on the maximum epoch, \ie, 120 for ImageNet and 300 for others. For ImageNet experiment, we used label-smoothing and mix-up strategies. Remark here the results may be further improved upon more training tricks such as knowledge distillation. 

In general, we follow the paradigm of pruning at initialization to start pruning right after $T_w\equiv T_h\equiv T_{period}/2$ epochs in Algorithm~\ref{alg:main.dhspg}, \ie, using half of the first period to warm up, then conducting half-space projection to start pruning via producing group sparsity. To compute the saliency score, we jointly consider both cosine-similarity and magnitude of each group $g\in\mathcal{G}$. For the groups $g\in\mathcal{G}_{p}$ which magnitudes need to be penalized, we set $\lambda_g$ in Algorithm~\ref{alg:main.dhspg} as $\lambda_g=\Lambda:=10^{-3}$ if the regularization coefficient does not need to be adjusted, \ie, $\cos{(\theta_g)}\geq 0$.  Note that $\Lambda:=10^{-3}$ is the commonly used coefficient in the sparsity regularization literatures~\citep{chen2021oto,xiao2014proximal}. Otherwise, we computed the $\lambda_{\text{min},g}:=-\cos(\theta_g)\norm{[\Grad f(\bm{x})]_g}_2$ and $\lambda_{\text{max},g}:=  -\frac{\norm{[\Grad f(\bm{x})]_g}_2}{\cos{(\theta_g)}}$ and set $\lambda_g$ by amplifying $\lambda_{\text{min},g}$ by 2 and projecting back to $\lambda_{\text{max},g}$ if exceeding. 


\textbf{Sanity of Automated ZIG and Automated Compression.}
The foremost step is to validate the correctness of the whole framework including both algorithm designs and infrastructure developments. We select five DNNs with complex topological structures, \ie, StackedUnets, DenseNet~\citep{huang2017densely}, 
{ConvNeXt~\citep{liu2022convnet}} and CARN~\citep{ahn2018fast} (see Appendix~\ref{appendix:ablation_study} for details), as well as DemoNet in Section~\ref{sec:ZIG}, all of which are not easily to be compressed via the existing non-automatic methods unless with sufficient domain knowledge and extensive handcrafting efforts. Remark here that StackedUnets consumes two input tensors, and is constructed by stacking two standard Unets~\citep{ronneberger2015u} with different downsamplers and aggregating the corresponding two outputs together. To intuitively illustrate the automated ZIG partition over these complicated structures, we provide the visualizations of the connected components of dependency in Appendix~\ref{appendix.zig_illustration}.
To quantitatively measure the performance of \algacro{}, we further employ these model architectures onto a variety of benchmark datasets, \eg, \fashionmnist{}~\citep{xiao2017online}, \svnh~\citep{netzer2011reading}, CIFAR10/100~\citep{Krizhevsky09} and {ImageNet~\citep{deng2009imagenet}}. The main results are presented in Table~\ref{table.otov2_various_models_datasets}.

\begin{wraptable}{r}{0.6\textwidth}
\begin{minipage}{\linewidth}
\vspace{-4.5mm}
    \centering
    \scriptsize
   \caption{\algacro{} on extensive DNNs and datasets.}
   \vspace{-3mm}
    \label{table.otov2_various_models_datasets}
	\resizebox{\linewidth}{!}{
	\begin{tabular}{ c|c|c|c|c|c}
	\Xhline{3\arrayrulewidth}
		    Backend  & Dataset & Method  & FLOPs & \# of Params & Top-1 Acc. \\
		    \hline
		    DemoNet & \fashionmnist{} & Baseline & 100\% & 100\% & 84.5\% \\ 
		    DemoNet & \fashionmnist{} & \textbf{\algacro{}} & \textbf{24.0\%} & \textbf{23.3\%} & \textbf{84.3\%} \\ 
		    \hdashline
		    StackedUnets & \svnh{} & Baseline & 100\% & 100\%  & 94.8\% \\
		    StackedUnets & \svnh{} & \textbf{\algacro{}} &  \textbf{26.4\%} & \textbf{17.0\%} &  \textbf{94.7\%} \\ 
		    \hdashline
		    DenseNet121 & \cifarhundred{} & Baseline & 100\% & 100\% & 77.0\% \\
            DenseNet121 & \cifarhundred{} & \textbf{\algacro{}} &  \textbf{20.8\%} & \textbf{26.7\%} & \textbf{75.5\%}  \\ 
            \hdashline
            {ConvNeXt-Tiny} & ImageNet & Baseline & 100\% & 100\% & 82.0\% \\
            {ConvNeXt-Tiny} & ImageNet & \textbf{\algacro{}} & \textbf{52.8\%} & \textbf{54.2\%} & \textbf{81.1\%} \\
			\Xhline{3\arrayrulewidth}
	\end{tabular}
	}
	\vspace{-3mm}
\end{minipage}
\end{wraptable}

Compared with the baselines trained by vanilla SGD, under the same amount of training cost, \algacro{} automatically reaches not only the competitive performance but also remarkable speed up in terms of FLOPs and parameter quantity reductions. In particular, 
{the slimmer DemoNet and StackedUnets computed by \algacro{} negligibly regress the top-1 accuracy by 0.1\%-0.2\% but significantly reduce the FLOPs and the number of parameters by 73.6\%-83.0\%.
Consistent phenomena also hold for \densenetonetwoone{} where the slimmer architecture is about 5 times more efficient than the full models but with competitive accuracy. OTOv2 works with TIMM~\citep{rw2019timm} to effectively compress ConvNeXt-Tiny which shows its flexibility to the modernized training tricks. The success of \algacro{} on these architectures well validates the sanity of the framework. }

\textbf{Benchmark Experiments.} The secondary step is to demonstrate the effectiveness of~\algacro{} by comparing the performance with other state-of-the-arts on benchmark compression experiments, \ie, common architectures such as \vgg{}~\citep{simonyan2014very} and \resnetfifty{}~\citep{he2016deep} as well as datasets \cifar{}~\citep{Krizhevsky09} and \imagenet{}~\citep{deng2009imagenet}.

\contour{black}{\vgg{} on \cifar{}.} We first consider vanilla \vgg{} and a variant referred as \vgg{}-BN that appends a batch normalization layer after every convolutional layer.
\algacro{} automatically exploits the minimal removal structures of \vgg{} and partitions the trainable variables into ZIGs (see  Figure~\ref{fig:vgg_zig} in Appendix~\ref{appendix.zig_illustration}). \dualhspg{} is then triggered over the partitioned ZIGs to train the model from scratch to find a solution with high group sparsity. Finally, a slimmer \vgg{} is automatically constructed without any fine-tuning. As shown in Table~\ref{table:vgg_cifar}, the slimmer \vgg{} leverages only 2.5\% of parameters to dramatically reduce the FLOPs by 86.6\% with the competitive top-1 accuracy to the full model and other state-of-the-art methods. 
Likewise, \algacro{} compresses \vggbn{} to maintain the baseline accuracy by the fewest 4.9\% of parameters and 23.7\% of FLOPs. Though SCP and RP reach higher accuracy, they require significantly 43\%-102\% more FLOPs than that of~\algacro{}.

\begin{table}[t]
    \centering
	\caption{\vgg{} and \vggbn{} for \cifar{}. Convolutional layers are in bold.}
	\label{table:vgg_cifar}
	\vspace{-2mm}
	\resizebox{\textwidth}{!}{
		\begin{tabular}{c|c|c|c|c|c}
			\Xhline{3\arrayrulewidth}
		    Method & BN & Architecture & FLOPs & \# of Params &  Top-1 Acc. \\
		    \hline
		    Baseline & \xmark & \textbf{64-64-128-128-256-256-256-512-512-512-512-512-512}-512-512 & 100\% & 100\% & 91.6\% \\
		    SBP~\citep{neklyudov2017structured} &  \xmark & \textbf{47-50-91-115-227-160-50-72-51-12-34-39-20}-20-272 & 31.1\% & 5.9\% &  \textbf{91.0\%} \\
		    BC~\citep{louizos2017bayesian} &  \xmark & \textbf{51-62-125-128-228-129-38-13-9-6-5-6-6}-6-20 & 38.5\% & 5.4\% &  \textbf{91.0\%} \\
		    RBC~\citep{zhou2019accelerate} & \xmark & \textbf{43-62-120-120-182-113-40-12-20-11-6-9-10}-10-22 & 32.3\% & 3.9\% & 90.5\% \\
		    RBP~\citep{zhou2019accelerate} & \xmark & \textbf{50-63-123-108-104-57-23-14-9-8-6-7-11}-11-12 & 28.6\% & 2.6\% &  \textbf{91.0\%}\\
		    \otovone{}~\citep{chen2021oto} & \xmark & \textbf{21-45-82-110-109-68-37-13-9-7-3-5-8}-170-344  & {16.3\%} & \textbf{2.5\%} & \textbf{91.0\%}  \\
		    \textbf{\algacro{}} (85\% group sparsity) & \xmark & \textbf{22-30-56-102-142-101-28-11-6-6-5-5}-101-127 & \textbf{13.4\%} & \textbf{2.5\%} & \textbf{91.0\%}\\ 
		    \hdashline
		    Baseline & \boldcheckmark & \textbf{64-64-128-128-256-256-256-512-512-512-512-512-512}-512-512 & 100\% & 100\% & 93.2\% \\
		    EC~\citep{li2016pruning} &  \boldcheckmark &
		    \textbf{32-64-128-128-256-256-256-256-256-256-256-256-256}-512-512 &  65.8\% & 37.0\% & 93.1\% \\
	        Hinge~\citep{li2020group} & \boldcheckmark & -- & 60.9\% & 20.0\% & 93.6\% \\ 
	        SCP~\citep{kang2020operation} & \boldcheckmark & -- & 33.8\% & 7.0\% & 93.8\%\\
	        \otovone{}~\citep{chen2021oto} & \boldcheckmark & \textbf{22-56-93-123-182-125-95-45-27-21-10-13-19}-244-392 & 26.8\%  & 5.5\%  & 93.3\% \\
	        RP~\citep{li2022revisiting} & \boldcheckmark & -- & 47.9\%  & 42.1\%  & \textbf{93.9\%} \\
	        CPGCN~\citep{dichannel2022} & \boldcheckmark & -- & 26.9\%  & 6.9\%  & 93.1\% \\
	        \textbf{\algacro{}} (80\% group sparsity) & \boldcheckmark & \textbf{14-51-77-122-183-146-92-41-16-13-8-11-14}-107-183 & \textbf{23.7\%} & \textbf{4.9\%} & 93.2\%\\ 
			\hline
		\Xhline{3\arrayrulewidth} 
	\end{tabular}}
	\vspace{-5.5mm}
\end{table}

\begin{wraptable}{r}{0.6\textwidth}
\begin{minipage}{\linewidth}
\vspace{-6mm}
    \centering
    \scriptsize
   \caption{\resnetfifty{} for~\cifar{}.}
   \vspace{-3.5mm}
	\label{table.resnet50_cifar10}
	\resizebox{\linewidth}{!}{
	\begin{tabular}{ c|c|c|c}
	\Xhline{3\arrayrulewidth}
		    Method  & FLOPs & \# of Params & Top-1 Acc.\\
		    \hline
		    Baseline & 100\% & 100\% & 93.5\% \\
		    AMC~\citep{he2018amc} & -- & 60.0\% & 93.6\% \\
		    ANNC~\citep{yang2020automatic}  & -- & 50.0\% & \textbf{95.0\%} \\
		    PruneTrain~\citep{lym2019prunetrain} & 30.0\% & -- & 93.1\% \\ 	   
		    N2NSkip~\citep{subramaniam2020n2nskip} & -- & 10.0\% & 94.4\% \\ 
            \otovone~\citep{chen2021oto} & {12.8\%} & {8.8\%} & 94.4\% \\ 
            \textbf{\algacro{}} (90\% group sparsity) & \textbf{2.2\%}  & \textbf{1.2\%} & 93.0\% \\ 
            \textbf{\algacro{}} (80\% group sparsity) &  7.8\% & 4.1\%  & 94.5\% \\ 
			\Xhline{3\arrayrulewidth}
	\end{tabular}
	}
\vspace{-4mm}
\end{minipage}
\end{wraptable}
\contour{black}{\resnetfifty{} on \cifar{}.} 
We now conduct experiments to compare with a few representative automatic pruning methods such as AMC and ANNC.
AMC establishes a reinforcement learning agent to guide a layer-wise compression, while it only achieves autonomy over a few prescribed specific models and requires multiple-stage training costs. 
Simple pruning methods such as ANNC and SFW-pruning~\citep{miao2021learning} do not construct slimmer models besides merely projecting variables onto zero. \algacro{} overcomes all these drawbacks and is the first to realize the end-to-end autonomy for simultaneously training and compressing arbitrary DNNs with high performance.
Furthermore, \algacro{} achieves the state-of-the-art results on this intersecting \resnetfifty{} on \cifar{} experiment. In particular, as shown in Table~\ref{table.resnet50_cifar10}, under 90\% group sparsity level, \algacro{} utilizes only 1.2\% parameters and 2.2\% FLOPs to reach 93.0\% top-1 accuracy with slight 0.5\% regression. Under 80\% group sparsity, \algacro{} achieves competitive 94.5\% accuracy to other pruning methods but makes use of substantially fewer parameters and FLOPs. 

\begin{figure}
\begin{minipage}{.58\linewidth}
 	\scriptsize
 	\resizebox{\linewidth}{!}{
 	\begin{tabular}{c|c|c|c}
 	\Xhline{3\arrayrulewidth}
 		    Method & FLOPs & \# of Params & Top-1/5 Acc. \\
 		    \hline
 		    Baseline & 100\% & 100\% & 76.1\% / 92.9\% \\
		    CP~\citep{He_2017_ICCV} & 66.7\% & -- & 72.3\% / 90.8\% \\
 		    ThiNet-50~\citep{luo2017thinet} & 44.2\% & 48.3\% & 71.0\% / 90.0\% \\
 		    DDS-26~\citep{huang2018data} & 57.0\% & 61.2\% & 71.8\% / 91.9\% \\
 		    SFP~\citep{he2018soft} & 41.8\% & -- & 74.6\% / 92.1\% \\ 
 		   RBP~\citep{zhou2019accelerate} & 43.5\% & 48.0\% & 71.1\% / 90.0\%\\
 		    RRBP~\citep{zhou2019accelerate} & 45.4\% & -- & 73.0\% / 91.0\% \\
 		    GBN-60~\citep{you2019gate} & 59.5\% & 68.2\% & {76.2\%} / 92.8\% \\
 		    Group-HS~\citep{yang2019deephoyer} & 52.9\% & - & \textbf{76.4\%} / \textbf{93.1\%} \\
 		    Hinge~\citep{li2020group} & 46.6\% & -- & 74.7\% / \ \ --- \ \ \ \ \\ 
 		  SCP~\citep{kang2020operation} &  45.7\%	& -	& 74.2\% / 92.0\% \\
 		  ResRep~\citep{ding2020lossless} & 45.5\% & - & {76.2\%} / 92.9\% \\
 		  \otovone~\citep{chen2021oto} & 34.5\% & 35.5\% & 74.7\% / 92.1\% \\
 		  RP~\citep{li2022revisiting} & 49.0\% & 54.1\% & 75.1\% / 92.5\% \\
 		  \textbf{\algacro{}} (70\% group sparsity) & \textbf{14.5\%} & \textbf{21.3\%} & 70.1\% / 89.3\% \\ 
 		  \textbf{\algacro{}} (60\% group sparsity) & 20.0\% & 28.5\% & 72.2\% / 90.7\% \\ 
 		  \textbf{\algacro{}} (50\% group sparsity) & 28.7\% & 37.4\% & 74.3\% / 92.1\% \\ %
 		  \textbf{\algacro{}} (40\% group sparsity) & 37.3\% & 49.6\% & 75.2\% / 92.2\% \\ %
 		  \Xhline{3\arrayrulewidth}
	\end{tabular}}
\end{minipage}
\begin{minipage}{.42\linewidth}
\centering
\includegraphics[width=\linewidth]{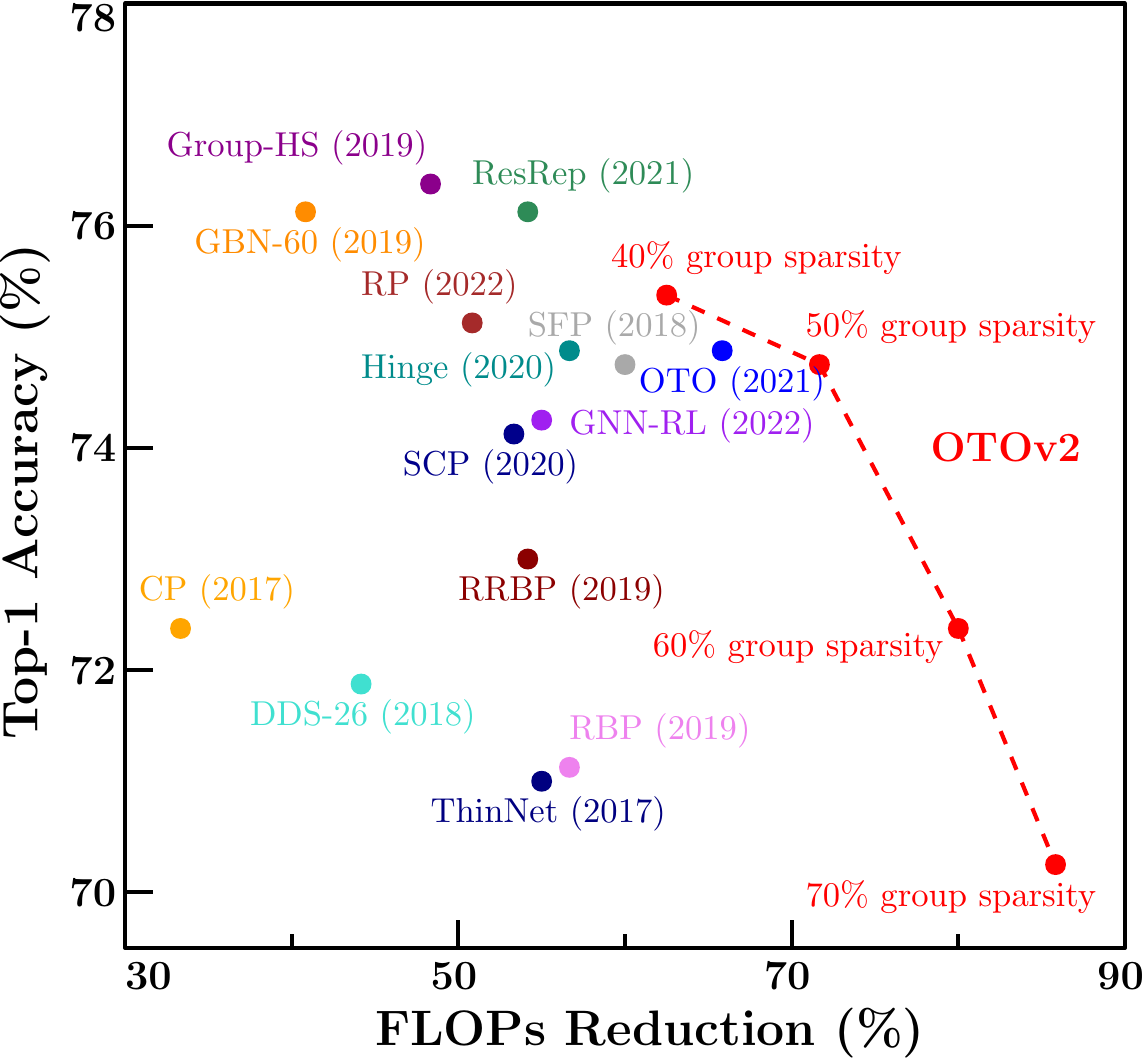}
\end{minipage}
\caption{ResNet50 on ImageNet. (a) The table shows the average accuracy. (b) The figure shows the best top-1 accuracy that we achieved to compare with the best results reported by other methods. } 
\label{fig:resnet50_imagenet}
\end{figure}

\contour{black}{\resnetfifty{} on \imagenet{}.} 
We finally employ \algacro{} to \resnetfifty{} on \imagenet{}. Similarly to other experiments, \algacro{} first automatically partitions the trainable variables of \resnetfifty{} into ZIGs (see Figure~\ref{fig:resnet50_zig} in Appendix~\ref{appendix.zig_illustration}), and then trains it once by \dualhspg{} to automatically construct slimmer models without fine-tuning. We report a performance portfolio under various target group sparsities ranging from 40\% to 70\% and compare with other state-of-the-art methods in Figure~\ref{fig:resnet50_imagenet}. 
Remark here that more reliably controlling the ultimate sparsity level to meet various deployment environments is a significant superiority of \dualhspg{} to the \hspg{}. An increasing target group sparsity results in more FLOPs and parameter quantity reductions, meanwhile sacrifices more accuracy.
It is noticeable that \algacro{} roughly exhibits a Pareto frontier in terms of top-1 accuracy and FLOPs reduction under various group sparsities. In particular, under 70\% group sparsity, the slimmer \resnetfifty{} by \algacro{} achieves fewer FLOPs (14.5\%) than others with a 70.3\% top-1 accuracy which is competitive to SFP~\citep{he2018soft} and RBP~\citep{zhou2019accelerate} especially under 3x fewer FLOPs. The one with 72.3\% top-1 accuracy under group sparsity as 60\% is competitive to CP~\citep{He_2017_ICCV}, DDS-26~\citep{huang2018data} and RRBP~\citep{zhou2019accelerate}, but 2-3 times more efficient. 
The slimmer \resnetfifty{}s under 40\% and 50\% group sparsity achieve the accuracy milestone, \ie, around 75\%, both of which FLOPs reductions outperform most of state-of-the-arts. ResRep~\citep{ding2020lossless}, Group-HS~\citep{yang2019deephoyer} and GBN-60~\citep{you2019gate} achieve over 76\% accuracy but consume more FLOPs than \algacro{} and are not automated for general DNNs.

\vspace{-4mm}

\section{Conclusion}

We propose \algacro{} that 
automatically trains a general DNN only once and compresses it into a more compact counterpart without pre-training or fine-tuning to significantly reduce its FLOPs and parameter quantity. The success stems from two {major} improvements upon OTOv1: \textit{(i)} automated ZIG partition and automated compressed model construction; and \textit{(ii)} \dualhspg{} method to more reliably solve structured-sparsity problem. 
\bibliographystyle{iclr2023_conference}

\newpage
\appendix
\section{Implementation Details}\label{appendix:implementation_details}


\subsection{Library Implementation}

The implementation of the current version of \algacro{} (February, 2023) depends on Pytorch~\citep{torch2019nips} and ONNX~\citep{bai2019} which is an open industrial standard for machine learning interoperability and widely used in numerous AI products of the majority of top-tier industries (see the partners in \url{https://onnx.ai/}).
In particular, the operators and the connectivity of a general DNN are retrieved by calling the ONNX optimization API of Pytorch, which is the first step to establish the trace graph of~\algacro{} in Algorithm~\ref{alg:main.zig_partition}. The proposed \dualhspg{} is implemented as an instance of the optimizer class for Pytorch. The ultimate compact model construction in Section~\ref{sec:automated_compression} is implemented by modifying the attributes and parameters of the vertices in onnx models according to $\bm{x}_{\dualhspg}^*$ and ZIGs. As a result, \algacro{} realizes an end-to-end pipeline to automatically and conveniently produce a compact model that meets inference restrictions and can be directly deployed onto product environments. In addition, the constructed compact DNNs in onnx format can be converted back into either torch or tensorflow formats if needed by open-source tools \citep{onnx2torch2022}.  

\subsection{Limitations of Current Version}

\textbf{Dependency.} The current version of~\algacro{} (February, 2023) depends on the ONNX optimization API in Pytorch to obtain vertices (operations) and the connections among them, \ie, $(\mathcal{E}, \mathcal{V})$ in line~\ref{line:vertices_edges_trace_graph} in Algorithm~\ref{alg:main.zig_partition}. It is the foremost step for establishing the trace graph of DNN for automated ZIG partition.
Therefore, the DNNs that do not comply with this API are not supported by the beta version of \algacro{} yet. We notice that Transformers sometimes have incompatibility issue for its position embedding layers, whereas its trainable part such as encoder layers does not. 
The current limitation is in the view of engineering perspective and would be resolved following the active and rapid developments of the ONNX and PyTorch community driven by the industry and academy. 

\textbf{Unknown Operators.} For sanity, we exclude the connected components that possess uncertain/unknown vertices for forming ZIGs in Algorithm~\ref{alg:main.zig_partition}. This mechanism largely ensures the generality of the automated ZIG partition onto general DNNs. But the ignorance over these connected components may miss some valid ZIGs thereby may leave redundant structures to be unpruned. We will maintain and update the  operator list which currently consists of 31 (known/certain) operators to better exploit ZIGs.  

\textbf{Update on June 2023}, we have largely eliminated the ONNX dependancy. \textbf{OTOv2 could now consume torch full model as input and produce compressed model in torch format.} The updated library will be released to the public in the coming months.

\section{Extensive Ablation Study}\label{appendix:ablation_study}

In this appendix, we present additional experiments to demonstrate the superiority of \dualhspg{} over finding local optima with higher performance than HSPG. As described in the main body, the main advantages of \dualhspg{} compared with \hspg{} in OTOv1 are \textit{(i)} enlarging search space to be capable of finding local optima with higher performance if any, and \textit{(ii)} more reliably guarantee ultimate group sparsity level. The later one has been demonstrated by the experiments of ResNet50, where \dualhspg{} can precisely achieve different prescribed group sparsity level to meet the requirements of various deploying environments. In contrast, \hspg{} lacks capacity to achieve a specific group sparsity level due to the implicit relationship between the regularization coefficient $\lambda$ and the sparsity level. {The former one will be validated in this appendix. In depth, one takeaway of DHSPG is to separate groups of variables, then treat them via different and specifically designed mechanisms which greatly enlarge the search space. However, HSPG applies the same mechanism to update all variables, which may easily result in convergence to the origin and may be not optimal. In addition, we also provide the runtime comparison in Appendix~\ref{appendix.runtime}. }

\subsection{Super Resolution}

We select the popular model architecture CARN~\citep{ahn2018fast} for super-resolution task with the scaling factor of two. As~\citep{oh2022attentive}, we use benchmark DIV2K dataset~\citep{agustsson2017ntire} for training and Set14~\citep{zeyde2010single}, B100~\citep{martin2001database} and Urban100~\citep{huang2015single} datasets for evaluation. Similarly to other experiments presented in the main body, OTOv2 automatically partitions the trainable variables of CARN into ZIGs (see Figure~\ref{fig:carn_zig} in Appendix~\ref{appendix.zig_illustration}). Then we follow the training procedure in \citep{agustsson2017ntire} to apply the Adam strategy into DHSPG, \ie, utilizing both first and second order momentums to compute a gradient estimate as line~\ref{line:compute_gradient_estimate} in Algorithm~\ref{alg:main.dhspg}. Under the same learning scheduler and total number of steps as the baseline, we conduct both DHSPG and HSPG to compute solutions with high group sparsity, where we set target group sparsity as 50\% for DHSPG and fine-tune the regularization coefficient $\lambda$ for HSPG as the power of 10 from $10^{-3}$ to $10^3$ to pick up the one with significant FLOPs and parameters reductions with satisfactory performance. Finally, the more compact CARN models are constructed via the automated compressed model construction in Section~\ref{sec:automated_compression}. We report the final results in Table~\ref{tab:dhspg_vs_hspg}.

\begin{table}[h]
    \centering
    \caption{OTOv2 under DHSPG versus HSPG on CARNx2.}
    \begin{tabular}{c|c|c|c|ccc}
    \Xhline{3\arrayrulewidth}
    \multirow{2}{*}{Method} &  \multirow{2}{*}{Optimizer} &  \multirow{2}{*}{FLOPS} & \multirow{2}{*}{\# of Params} & \multicolumn{3}{c}{PSNR} \\
    \cline{5-7}
     & & & & Set14 & B100 & Urban100 \\ \hline
    Baseline & Adam & 100\% & 100\% & 33.5 & 32.1 & 31.5\\
    \textbf{OTOv2} & HSPG & 35.5\% & 35.4\% & 33.0 & 31.6 & 30.9 \\
    \textbf{OTOv2} & \textbf{DHSPG}  & \textbf{24.3\%} & \textbf{24.1\%}  & \textbf{33.2}  & \textbf{31.9}  & \textbf{31.1} \\
    \Xhline{3\arrayrulewidth}
    \end{tabular}
    \label{tab:dhspg_vs_hspg}
\end{table}

Unlike the classification experiments where HSPG and DHSPG perform quite competitively, OTOv2 with DHSPG significantly outperforms OTOv2 with HSPG on this super-resolution task via CARN by using 46\% fewer FLOPs and parameters to achieve significantly better PSNR on these benchmark datasets. It exhibits a strong evidence to show the higher generality of DHSPG to enlarge the search space rather than restrict it near the origin point to fit more general applications.  

\subsection{Bert on Squad}\label{appendix:bert_squad}

We next compare DHSPG versus HSPG on pruning the large-scale transformer \bert{}~\citep{NIPS2017_3f5ee243}, evaluated on Squad, a question-answering benchmark~\citep{rajpurkar2016squad}. Remark here that since Transformer are not reliably compatible with PyTorch's ONNX optimization API at this moment, they can not enjoy the end-to-end autonomy of OTOv2 yet. To compare two optimizers, we apply DHSPG onto the OTOv1 framework, which manually conducts ZIG partition and constructs compressed Bert without fine-tuning. The results are reported in Table~\ref{table:bert_squad}.

Based on Table~\ref{table:bert_squad}, it is apparent that DHSPG performs significantly better than HSPG and ProxSSI~\citep{deleu2021structured} by achieving 83.8\%-87.7\% F1-scores and 74.6\%-80.0\% exact match rates. In constrast, HSPG and ProxSSI reach 82.0\%-84.1\% F1-scores and 71.9\%-75.0\% exact match rates. The underlying reason of such remarkable improvement by DHSPG is that DHSPG enlarges to search space away from trapping around origin points by partitioning groups into magnitude-penalized or not and treating them separately. However, both ProxSSI and HSPG penalize the magnitude of all variables and apply the same update mechanism onto all variables, which deteriorate the performance significantly in this experiment. The results well validate the effectiveness of the design of DHSPG to enlarge search space for typical better generalization performance. 

\begin{table}[h]
    \centering
    \caption{Numerical results of Bert on Squad.}
    \label{table:bert_squad}
    \begin{minipage}{.7\textwidth}
    \resizebox{\textwidth}{!}{
\begin{tabular}{c|c|c|c|c}
	\Xhline{3\arrayrulewidth}
    Method  &\# of Params  & Exact  & F1-score & Inference SpeedUp\\
	\hline
    Baseline &  100\% & 81.0\% & 88.3\% & $1\times$\\
    ProxSSI~\citep{deleu2021structured}  & \hspace{1.3mm}83.4\%$^\dagger$ & 72.3\% & 82.0\% & $1\times$ \\
    OTOv1 + \textbf{DHSPG} (10\% group sparsity) & 93.3\% &  \textbf{80.0\%} & \textbf{87.7\%} & $1.1\times$ \\
    OTOv1 + \textbf{DHSPG} (30\% group sparsity) & 80.1\% &  {79.4\%} & {87.3\%} & $1.2\times$ \\
    OTOv1 + \textbf{DHSPG} (50\% group sparsity) & 68.3\% &  78.1\% & 86.2\% & $1.3\times$ \\
    OTOv1 + \textbf{DHSPG} (70\% group sparsity) & \textbf{55.0\%} &  74.6\% & 83.8\% & $\bm{1.4\times}$ \\
    OTOv1 + HSPG & 91.0\% &  75.0\% & 84.1\% & $1.1\times$ \\
     OTOv1 + HSPG & 66.7\% &  71.9\% & 82.0\% & $1.3\times$ \\
	\hline%
	\Xhline{3\arrayrulewidth} 
    \multicolumn{5}{l}{$^\dagger$ Approximate value based on the group sparsity reported in~\citep{deleu2021structured}. }	
\end{tabular}
}
    \end{minipage}
    \hspace{1mm}
    \begin{minipage}{.28\textwidth}
    \centering
     \includegraphics[width=\linewidth]{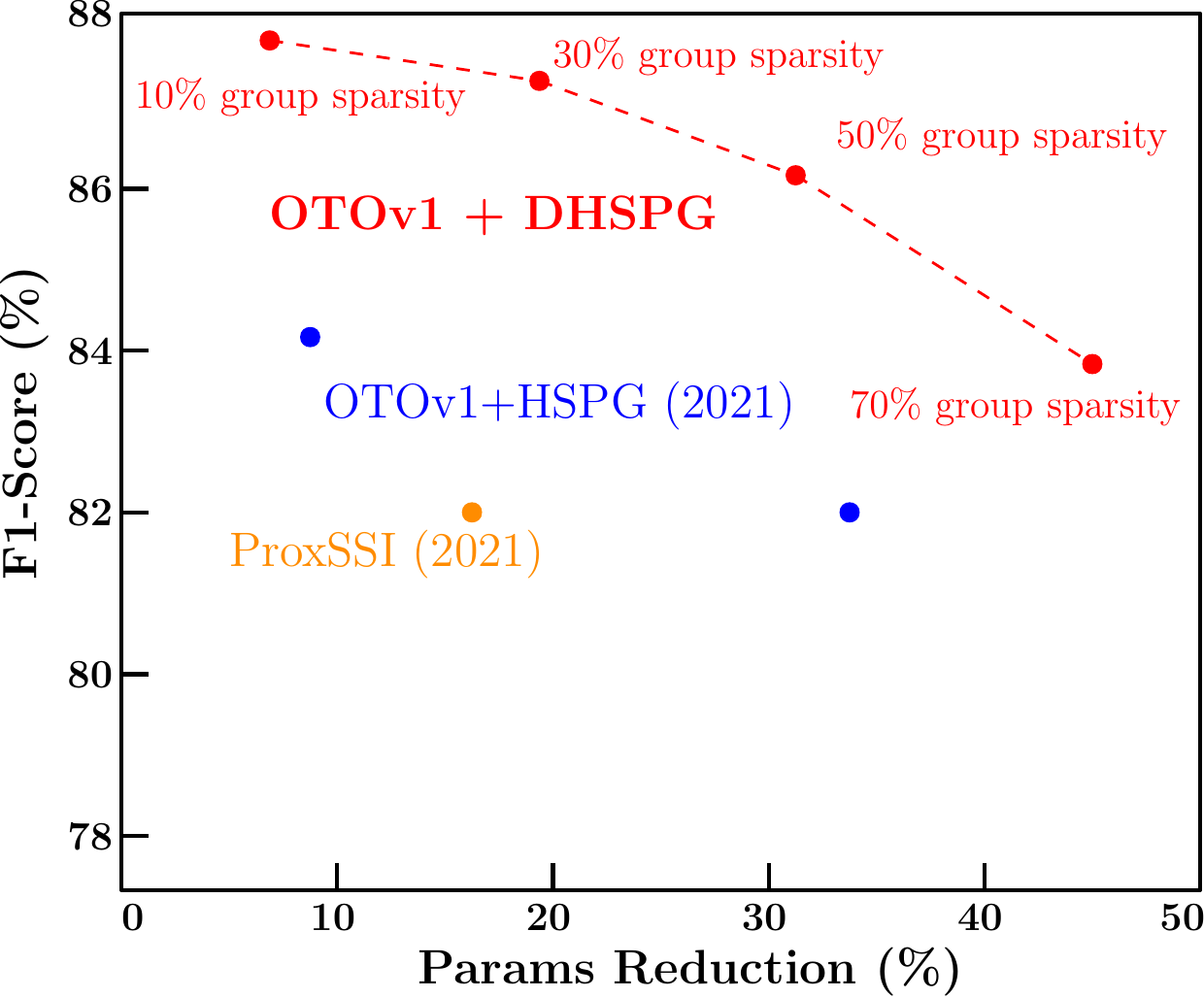}
    \end{minipage}
\end{table}



\subsection{Runtime Comparison}\label{appendix.runtime}

We provide runtime comparison of the proposed DHSPG versus the standard optimization algorithms used in the benchmark baseline experiments. In particular, we calculate the average runtime per epoch and report the relative runtime in Figure~\ref{fig:runtime_comparison}. Based on Figure~\ref{fig:runtime_comparison}, we observe that the per epoch cost of DHSPG is competitive to other standard optimizers. The negligibly heavier computational cost is due to the Half-Space projector in DHSPG for yielding group sparsity, which typically only appears on a small portion of whole training process before target group sparsity level is achieved. In addition, OTOv2 only trains the DNN once with the similar amount of total epochs for training the baselines. However, other compression methods have to proceed multi-stage training procedures including pretraining the baselines by the standard optimizers, thereby are not training efficient compared to OTOv2.

\begin{figure}[h]
    \centering
    \includegraphics[width=0.5\linewidth]{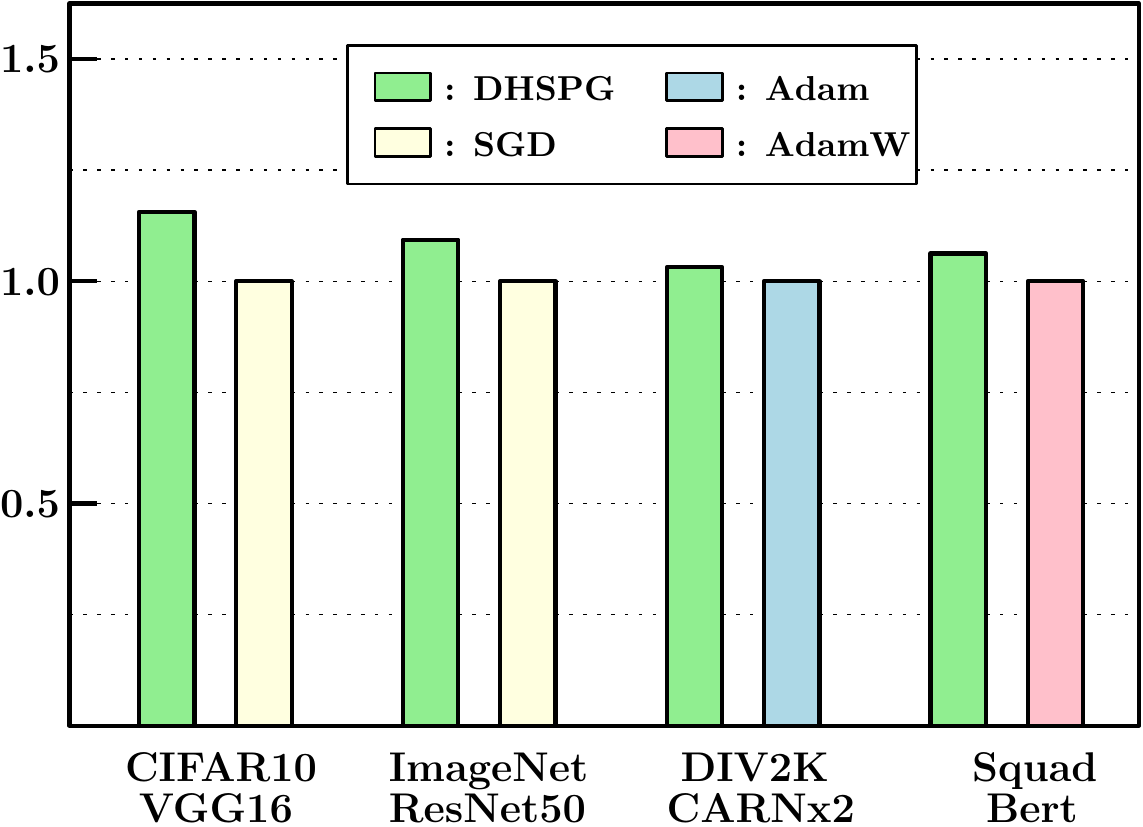}
    \caption{Average runtime per epoch relative comparison.}
    \label{fig:runtime_comparison}
\end{figure}

\section{Convergence Analysis}\label{appendix:convergence}

In this appendix, we provide the rough convergence analysis for the proposed \dualhspg{}.  This paper is application-track and mainly focuses on deep learning compression and infrastructure but not the theoretical convergence. Therefore, for simplicity, we assume full gradient estimate at each iteration. More rigorous analysis under stochastic settings will be left as future work,

\begin{lemma}\label{lemma.descent_inequality}
The objective function $f$ satisfies 
\begin{equation}
f(\bm{x}+\alpha\bm{d}(\bm{x}))\leq f(\bm{x})-\left(\alpha-\frac{L\alpha^2}{2}\right)\norm{\Grad f(\bm{x})}_2^2+\frac{L\alpha^2}{2}\sum_{g\in\mathcal{G}_p}\lambda_g^2+(L\alpha-1)\alpha\sum_{g\in\mathcal{G}_p}\lambda_g\cos{(\theta_g)}\norm{[\Grad f(\bm{x})]_g}_2.
\end{equation}
\end{lemma}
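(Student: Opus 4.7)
The plan is to derive the inequality by combining the standard $L$-smooth descent lemma with explicit expansions of $\Grad f(\bm{x})^\top \bm{d}(\bm{x})$ and $\norm{\bm{d}(\bm{x})}_2^2$ according to the two-piece definition of the search direction. First, I would invoke the descent lemma implied by $L$-smoothness of $f$: $f(\bm{x}+\alpha\bm{d}(\bm{x})) \leq f(\bm{x}) + \alpha \Grad f(\bm{x})^\top \bm{d}(\bm{x}) + \tfrac{L\alpha^2}{2}\norm{\bm{d}(\bm{x})}_2^2$. All that remains is to compute the inner product and the squared norm, then match terms.

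Next, I would split both quantities along the partition $\mathcal{G}_p \cup \mathcal{G}_{np}$. On $\mathcal{G}_{np}$, the step is vanilla negative gradient, so $[\bm{d}(\bm{x})]_{\mathcal{G}_{np}} = -[\Grad f(\bm{x})]_{\mathcal{G}_{np}}$, contributing $-\norm{[\Grad f(\bm{x})]_{\mathcal{G}_{np}}}_2^2$ to the inner product and $\norm{[\Grad f(\bm{x})]_{\mathcal{G}_{np}}}_2^2$ to the squared norm. On $\mathcal{G}_p$, working under the (nominal) regime $\norm{[\bm{x}]_g}_2 \geq \tau$ so that the $\max$ in the search direction resolves to $\norm{[\bm{x}]_g}_2$, each group contributes $-\norm{[\Grad f(\bm{x})]_g}_2^2 - \lambda_g [\Grad f(\bm{x})]_g^\top [\bm{x}]_g / \norm{[\bm{x}]_g}_2$ to the inner product. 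Using the defining identity $\cos(\theta_g) = [\Grad f(\bm{x})]_g^\top [\bm{x}]_g / (\norm{[\Grad f(\bm{x})]_g}_2 \norm{[\bm{x}]_g}_2)$, this simplifies to $-\norm{[\Grad f(\bm{x})]_g}_2^2 - \lambda_g \cos(\theta_g)\norm{[\Grad f(\bm{x})]_g}_2$.

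For the squared norm on $\mathcal{G}_p$, I would expand $\norm{[\Grad f(\bm{x})]_g + \lambda_g [\bm{x}]_g/\norm{[\bm{x}]_g}_2}_2^2$ groupwise (the unit vectors $[\bm{x}]_g/\norm{[\bm{x}]_g}_2$ live in disjoint coordinate blocks, so cross-group terms vanish) to obtain $\norm{[\Grad f(\bm{x})]_g}_2^2 + \lambda_g^2 + 2\lambda_g\cos(\theta_g)\norm{[\Grad f(\bm{x})]_g}_2$ for each $g\in\mathcal{G}_p$. Summing across groups and recombining with the $\mathcal{G}_{np}$ piece yields $\Grad f(\bm{x})^\top \bm{d}(\bm{x}) = -\norm{\Grad f(\bm{x})}_2^2 - \sum_{g\in\mathcal{G}_p}\lambda_g\cos(\theta_g)\norm{[\Grad f(\bm{x})]_g}_2$ and $\norm{\bm{d}(\bm{x})}_2^2 = \norm{\Grad f(\bm{x})}_2^2 + \sum_{g\in\mathcal{G}_p}\lambda_g^2 + 2\sum_{g\in\mathcal{G}_p}\lambda_g\cos(\theta_g)\norm{[\Grad f(\bm{x})]_g}_2$.

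Finally, I would substitute both expressions back into the descent lemma and collect coefficients. The $\norm{\Grad f(\bm{x})}_2^2$ terms merge to $-(\alpha - L\alpha^2/2)$; the $\lambda_g^2$ terms give the $L\alpha^2/2$ factor; and the mixed $\lambda_g\cos(\theta_g)\norm{[\Grad f(\bm{x})]_g}_2$ terms combine as $-\alpha + L\alpha^2 = \alpha(L\alpha-1)$, recovering the stated inequality exactly. The only genuine subtlety is the safeguard $\max\{\norm{[\bm{x}]_g}_2,\tau\}$: when the max clips to $\tau$, the unit vector above must be replaced by $[\bm{x}]_g/\tau$, which changes the $\lambda_g^2$ term into $\lambda_g^2\norm{[\bm{x}]_g}_2^2/\tau^2\leq \lambda_g^2$ and analogously rescales the cross term, so the clean bound above still holds (or the analysis is stated for the regime $\norm{[\bm{x}]_g}_2\geq \tau$, as is typical once iterates have been warmed up). Handling this boundary case cleanly is the main bookkeeping obstacle, but it is not deep.
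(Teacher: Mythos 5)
Your proposal is correct and follows essentially the same route as the paper: invoke the $L$-smoothness descent lemma, then expand $\Grad f(\bm{x})^\top\bm{d}(\bm{x})$ and $\norm{\bm{d}(\bm{x})}_2^2$ blockwise over $\mathcal{G}_p\cup\mathcal{G}_{np}$ using the definition of $\cos(\theta_g)$, and collect coefficients. The only difference is cosmetic---the paper detours through an orthogonal decomposition of $[\bm{d}(\bm{x})]_g$ into components parallel and perpendicular to $[\Grad f(\bm{x})]_g$ and recombines them via $\sin^2(\theta_g)+\cos^2(\theta_g)=1$, whereas your direct expansion of the square reaches the same three terms more quickly; your remark about the $\tau$-safeguard in the normalization is a boundary case the paper's proof silently elides.
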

\begin{proof}
By the algorithm, 
\begin{equation} 
\bm{d}(\bm{x})=\left\{
\begin{array}{ll}
-[\Grad f(\bm{x})]_g-\lambda_g \frac{[\bm{x}]_g}{\norm{[\bm{x}]_g}_2} & \text{if}\ g\in\mathcal{G}_{p}, \\
-[\Grad f(\bm{x})]_g & \text{otherwise}.
\end{array}\right.
\end{equation}
We can rewrite the direction $[\bm{d}(\bm{x})]_{g}$ for $g\in\mathcal{G}_p$ as the summation of two parts, 
\begin{equation}
\left[\bm{d}(\bm{x})\right]_{g}=\left[\hat{\bm{d}}(\bm{x}) +\tilde{\bm{d}}(\bm{x})\right]_{g},
\end{equation}
where 
\begin{equation}
\left[\hat{\bm{d}}(\bm{x})\right]_{g}^\top \left[\Grad f(\bm{x})\right]_{g}=0, \text{\ and\ } \norm{\left[\hat{\bm{d}}(\bm{x})\right]_{g}}=\norm{-\lambda_g \frac{\left[\bm{x}\right]_{g}}{\norm{[\bm{x}]_g}}\cdot \cos{(\theta_g - 90^\circ)}}=\lambda_g \sin{(\theta_g)}.
\end{equation}
Consequently, 
\begin{equation}
\begin{split}
\norm{\left[\tilde{\bm{d}}(\bm{x})\right]_g}^2&=\norm{\left[\bm{d}(\bm{x})\right]_g}^2-\norm{\left[\hat{\bm{d}}(\bm{x})\right]_g}^2\\
&=\norm{-[\Grad f(\bm{x})]_g-\lambda \frac{[\bm{x}]_g}{\norm{[\bm{x}]_g}} }^2-\lambda_g^2\sin^2{(\theta_g)}\\
&=\norm{\left[\Grad f(\bm{x})\right]_g}^2+\lambda_g^2+2[\Grad f(\bm{x})]_g^\top \lambda_g\frac{[\bm{x}]_g}{\norm{[\bm{x}]_g}}-\lambda_g^2\sin^2{(\theta_g)}\\
&=\norm{[\Grad f(\bm{x})]_g}^2+\lambda_g^2\cos^2{(\theta_g)}+2\lambda_g\norm{[\Grad f(\bm{x})]_g} \cos{(\theta_g)}\\
&=\left[\norm{[\Grad f(\bm{x})]_g}+\lambda_g\cos{(\theta_g)}\right]^2,
\end{split}
\end{equation}
and
\begin{equation}
[\tilde{\bm{d}}(\bm{x})]_g=-\frac{\norm{[\Grad f(\bm{x})]_g}+\lambda_g\cos{(\theta_g)}}{\norm{[\Grad f(\bm{x})]_g}}[\Grad f(\bm{x})]_g:=-\omega_g [\Grad f(\bm{x})]_g
\end{equation}
By the descent lemma, we have that 
\begin{equation*}
\begin{split}
&f(\bm{x}+\alpha\bm{d}(\bm{x}))\\
\leq& f(\bm{x})+\alpha\Grad f(\bm{x})^\top \bm{d}(\bm{x}) + \frac{L\alpha^2}{2}\norm{\bm{d}(\bm{x})}^2_2\\
=& f(\bm{x})+\alpha[\Grad f(\bm{x})]_{\mathcal{G}_{np}}^\top [\bm{d}(\bm{x})]_{\mathcal{G}_{np}} +\alpha[\Grad f(\bm{x})]_{\mathcal{G}_{p}}^\top [\bm{d}(\bm{x})]_{\mathcal{G}_{p}} + \frac{L\alpha^2}{2}\norm{[\bm{d}(\bm{x})]_{\mathcal{G}_{np}}}^2_2+ \frac{L\alpha^2}{2}\norm{[\bm{d}(\bm{x})]_{\mathcal{G}_p}}^2_2\\
=& f(\bm{x})-\left(\alpha-\frac{L\alpha^2}{2}\right)\norm{\Grad f(\bm{x})]_{\mathcal{G}_{np}}}^2 +\alpha[\Grad f(\bm{x})]_{\mathcal{G}_{p}}^\top [\bm{d}(\bm{x})]_{\mathcal{G}_{p}}+ \frac{L\alpha^2}{2}\norm{[\bm{d}(\bm{x})]_{\mathcal{G}_p}}^2_2\\
=&f(\bm{x})-\left(\alpha-\frac{L\alpha^2}{2}\right)\norm{\Grad f(\bm{x})]_{\mathcal{G}_{np}}}^2+\alpha[\Grad f(\bm{x})]_{\mathcal{G}_p}^\top \left[\hat{\bm{d}}(\bm{x}) +\tilde{\bm{d}}(\bm{x})\right]_{\mathcal{G}_{p}} + \frac{L\alpha^2}{2}\norm{\left[\hat{\bm{d}}(\bm{x}) +\tilde{\bm{d}}(\bm{x})\right]_{\mathcal{G}_{p}}}^2_2\\
=&f(\bm{x})-\left(\alpha-\frac{L\alpha^2}{2}\right)\norm{\Grad f(\bm{x})]_{\mathcal{G}_{np}}}^2+\alpha[\Grad f(\bm{x})]_{\mathcal{G}_p}^\top \left[\tilde{\bm{d}}(\bm{x})\right]_{\mathcal{G}_p}+\frac{L\alpha^2}{2}\norm{\left[\hat{\bm{d}}(\bm{x})\right]_{\mathcal{G}_p}}^2_2\\
&+{L\alpha^2}\left[\hat{\bm{d}}(\bm{x})\right]_{\mathcal{G}_p}^\top\left[\tilde{\bm{d}}(\bm{x})\right]_{\mathcal{G}_p}+\frac{L\alpha^2}{2}\norm{\left[\tilde{\bm{d}}(\bm{x})\right]_{\mathcal{G}_p}}^2_2\\
=&f(\bm{x})-\left(\alpha-\frac{L\alpha^2}{2}\right)\norm{\Grad f(\bm{x})]_{\mathcal{G}_{np}}}^2-\alpha\left[\Grad f(\bm{x})\right]_{\mathcal{G}_p}^\top \sum_{g\in\mathcal{G}_p}\frac{\norm{\left[\Grad f(\bm{x})\right]_{g}}+\lambda_g\cos{(\theta_g)}}{\norm{\left[\Grad f(\bm{x})\right]_{g}}}\left[\Grad f(\bm{x})\right]_{g}\\
&+\frac{L\alpha^2}{2}\sum_{g\in\mathcal{G}_p}\lambda_g^2\sin^2{(\theta_g)}+\frac{L\alpha^2}{2}\sum_{g\in\mathcal{G}_p}\left(\norm{[\Grad f(\bm{x})]_{g}}_2+\lambda_g\cos{(\theta_g)}\right)^2\\
=&f(\bm{x})-\left(\alpha-\frac{L\alpha^2}{2}\right)\norm{\Grad f(\bm{x})]_{\mathcal{G}_{np}}}^2-\left(\alpha-\frac{L\alpha^2}{2}\right)\norm{[\Grad f(\bm{x})]_{\mathcal{G}_p}}_2^2-\alpha\sum_{g\in\mathcal{G}_p}\lambda_g\cos{(\theta_g)}\norm{[\Grad f(\bm{x})]_g}\\
&+\frac{L\alpha^2}{2}\sum_{g\in\mathcal{G}_p}\lambda_g^2\left(\sin^2{(\theta_g)}+\cos^2{(\theta_g)}\right)+L\alpha^2\sum_{g\in\mathcal{G}_p}\lambda_g \norm{[\Grad f(\bm{x})]_g}\cos{(\theta_g)}\\
=&f(\bm{x})-\left(\alpha-\frac{L\alpha^2}{2}\right)\norm{\Grad f(\bm{x})}_2^2+\frac{L\alpha^2}{2}\sum_{g\in\mathcal{G}_p}\lambda_g^2+(L\alpha-1)\alpha\sum_{g\in\mathcal{G}}\lambda_g\cos{(\theta_g)}\norm{[\Grad f(\bm{x})]_g}
\end{split}
\end{equation*}
\end{proof}

\begin{lemma}\label{lemma.sufficient_decrease_f}
Suppose $\alpha\leq \frac{1}{L}$ and $f$ is L-smooth, then there exists some positive $\lambda_g\in ( \lambda_{\text{min}, g}, \lambda_{\text{max}, g})$ for any $g\in\mathcal{G}_p$ such as 
\begin{equation}
f(\bm{x}+\alpha\bm{d}(\bm{x}))\leq f(\bm{x})-\left(\alpha-\frac{L\alpha^2}{2}\right)\norm{[\Grad f(\bm{x})]_{\mathcal{G}_{np}}}_2^2
\end{equation}
\end{lemma}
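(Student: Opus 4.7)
The plan is to start from Lemma~\ref{lemma.descent_inequality} and reduce the target bound to a per-group quadratic inequality in $\lambda_g$, then verify that this inequality admits a solution inside the dual-half-space interval $(\lambda_{\min,g},\lambda_{\max,g})$ prescribed earlier in the paper.

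Writing $u_g := \|[\nabla f(\bm{x})]_g\|_2$ and $c_g := \cos(\theta_g)$, and splitting $\|\nabla f(\bm{x})\|_2^2 = \|[\nabla f(\bm{x})]_{\mathcal{G}_{np}}\|_2^2 + \sum_{g\in\mathcal{G}_p} u_g^2$, subtracting the desired right-hand side from that of Lemma~\ref{lemma.descent_inequality} reduces the claim to verifying
\[
q_g(\lambda_g) \,:=\, \tfrac{L\alpha^2}{2}\lambda_g^2 \,-\, (1 - L\alpha)\alpha\,c_g u_g\,\lambda_g \,-\, \bigl(\alpha - \tfrac{L\alpha^2}{2}\bigr) u_g^2 \,\le\, 0 \qquad \forall\, g\in\mathcal{G}_p.
\]
Since $\alpha\le 1/L$ gives $\alpha - L\alpha^2/2 \ge 0$, each $q_g$ is an upward parabola in $\lambda_g$ with non-positive constant term; in particular $q_g(0)\le 0$, and the sublevel set $\{\lambda_g\ge 0 : q_g(\lambda_g)\le 0\}$ is a non-degenerate interval of the form $[0,\lambda_g^\star]$.

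The remaining task is to exhibit a positive $\lambda_g$ that lies in both this sublevel set and $(\lambda_{\min,g},\lambda_{\max,g})$. When $c_g\ge 0$, the prescribed interval is $(0,\infty)$ so any small positive $\lambda_g$ works. When $c_g<0$, both endpoints $\lambda_{\min,g}=-c_g u_g$ and $\lambda_{\max,g}=-u_g/c_g$ are positive, and substituting them directly yields
\[
q_g(\lambda_{\min,g}) \,=\, -(1-c_g^2)\bigl(\alpha - \tfrac{L\alpha^2}{2}\bigr) u_g^2 \,\le\, 0,
\qquad
q_g(\lambda_{\max,g}) \,=\, \tfrac{L\alpha^2(1-c_g^2)u_g^2}{2c_g^2} \,\ge\, 0,
\]
with strict signs whenever $u_g\ne 0$ and $\sin\theta_g\ne 0$. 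By continuity $q_g$ vanishes at some $\lambda_g^\star\in[\lambda_{\min,g},\lambda_{\max,g}]$, and the open interval $(\lambda_{\min,g},\lambda_g^\star)$ is non-empty and lies inside both the dual-half-space interval and the sublevel set $\{q_g\le 0\}$; any such $\lambda_g$ is admissible. Summing the per-group inequalities and combining with Lemma~\ref{lemma.descent_inequality} then yields the claimed bound.

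The main obstacle will be the degenerate configurations $u_g=0$ and $|c_g|=1$. When $u_g=0$ the group carries no first-order signal, $q_g$ reduces to $\tfrac{L\alpha^2}{2}\lambda_g^2$, and one sets $\lambda_g=0$ (null group update) without violating the inequality. When $|c_g|=1$ the interval $(\lambda_{\min,g},\lambda_{\max,g})$ collapses to a point and the strict sign argument above fails; this can be handled by a boundary convention in the algorithm or by perturbing $c_g$ slightly, and these configurations form a measure-zero set that does not affect the overall descent guarantee.
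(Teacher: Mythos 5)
Your proposal is correct and follows the paper's overall strategy---start from Lemma~\ref{lemma.descent_inequality}, absorb the residual terms into a per-group upward parabola $q_g(\lambda_g)$ (identical to the paper's $h(\lambda_g,g)$), and show that its non-positivity set meets $(\lambda_{\min,g},\lambda_{\max,g})$---but you execute the key existence step differently. The paper solves $h(\lambda_g,g)\le 0$ explicitly via the quadratic formula to obtain the positive root $\hat\lambda_g$, and then proves $\hat\lambda_g>\lambda_{\min,g}$ through a somewhat heavy manipulation involving $\tan^2(\theta_g)$; it then selects $\lambda_g\in(\lambda_{\min,g},\min\{\lambda_{\max,g},\hat\lambda_g\})$. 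You instead evaluate $q_g$ at the two endpoints, obtaining $q_g(\lambda_{\min,g})=-(1-\cos^2\theta_g)\,(\alpha-\tfrac{L\alpha^2}{2})\,\|[\nabla f(\bm{x})]_g\|_2^2\le 0$ and $q_g(\lambda_{\max,g})\ge 0$ (both computations check out), and invoke the intermediate value theorem. Your route is more elementary and extracts strictly more information: it shows the positive root actually lies inside $[\lambda_{\min,g},\lambda_{\max,g}]$, so the truncation by $\lambda_{\max,g}$ in the paper's argument is never needed to secure a non-empty admissible interval at the lower end. You are also more careful than the paper about the degenerate configurations $\|[\nabla f(\bm{x})]_g\|_2=0$ and $|\cos\theta_g|=1$, where the admissible open interval collapses; the paper's strict inequality $\hat\lambda_g>\lambda_{\min,g}$ silently fails in exactly those cases (its ``$\sqrt{\cdot}>1$'' step degenerates to equality when $\tan\theta_g=0$), so flagging them as excluded is the right call.
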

\begin{proof}
Based on Lemma~\ref{lemma.descent_inequality} and $\alpha\leq \frac{1}{L}$, we have that 
\begin{equation}
\begin{split}
&f(\bm{x}+\alpha\bm{d}(\bm{x}))\\
\leq &f(\bm{x})-\left(\alpha-\frac{L\alpha^2}{2}\right)\norm{\Grad f(\bm{x})}_2^2+\frac{L\alpha^2}{2}\sum_{g\in\mathcal{G}_p}\lambda_g^2+(L\alpha-1)\alpha\sum_{g\in\mathcal{G}}\lambda_g\cos{(\theta_g)}\norm{[\Grad f(\bm{x})]_g}_2,\\
\leq  & f(\bm{x})-\left(\alpha-\frac{L\alpha^2}{2}\right)\norm{\left[\Grad f(\bm{x})\right]_{\mathcal{G}_{np}}}_2^2+\sum_{g\in\mathcal{G}_p}h(\lambda_g, g)
\end{split}
\end{equation}
where we denote  
\begin{equation}
\begin{split}
h(\lambda_g, g):=\frac{L\alpha^2\lambda_g^2}{2}+(L\alpha-1)\alpha\cos{(\theta_g)}\norm{[\Grad f(\bm{x})]_g}_2\lambda_g-\left(\alpha-\frac{L\alpha^2}{2}\right)\norm{[\Grad f(\bm{x})]_g}_2^2.
\end{split}
\end{equation}
We can see that for any $g\in\mathcal{G}_p$, then $h(\lambda, g)\leq 0$ if and only if the following holds
\begin{equation}\label{eq:hat_lambda_g}
\begin{split}
\lambda_g &\leq \underbrace{\frac{(1-L\alpha)\alpha\cos{(\theta_g)}\norm{[\Grad f(\bm{x})]_g}_2 + \sqrt{(1-L\alpha)^2\alpha^2\cos^2{(\theta_g)}\norm{[\Grad f(\bm{x})]_g}_2^2+2L\alpha^2\left(\alpha-\frac{L\alpha^2}{2}\right)\norm{[\Grad f(\bm{x})]_g}_2^2}}{L\alpha^2}}_{:=\hat{\lambda}_g}.
\end{split}
\end{equation}
Next, we need to show for the group $g$ that requires $\lambda_g$ adjustment, the $(\lambda_{\text{min}, g}, \hat{\lambda}_g)$ is a valid interval, \ie, $\hat{\lambda}_g\geq \lambda_{\text{min}, g}$. To show it, combining with $\lambda_{\text{min}, g}=-\cos{(\theta_g)}\norm{[\Grad f(\bm{x})]_g}_2$,  we have that 
\begin{equation}
\begin{split}
\hat{\lambda}_g&= \frac{(L\alpha-1)\alpha\lambda_{\text{min}, g} + \sqrt{(1-L\alpha)^2\alpha^2\lambda_{\text{min}, g}^2+2L\alpha^2\left(\alpha-\frac{L\alpha^2}{2}\right)\lambda_{\text{min}, g}^2/\cos^2{(\theta_g)}}}{L\alpha^2}\\
&=\frac{(L\alpha-1)\lambda_{\text{min}, g} + \lambda_{\text{min}, g}\sqrt{1-L^2\alpha^2\tan^2{(\theta_g)}+2L\alpha\tan^2{(\theta_g)}}}{L\alpha}\\
&=\frac{(L\alpha-1)\lambda_{\text{min}, g} + \lambda_{\text{min}, g}\sqrt{-(L\alpha-1)^2\tan^2{(\theta_g)}+\tan^2{(\theta_g)}+1}}{L\alpha}\\
&>\frac{(L\alpha-1)\lambda_{\text{min}, g} + \lambda_{\text{min}, g}}{L\alpha}\\
&=\lambda_{\text{min}, g},
\end{split}
\end{equation}
where the second last inequality holds since $0<\alpha\leq 1/L$, then 
$$
\sqrt{-(L\alpha-1)^2\tan^2{(\theta_g)}+\tan^2{(\theta_g)}+1}>1.
$$
Then, we need to show that $\hat{\lambda}_g$ is greater than 0. There are two cases to be considered:
\begin{itemize}
    \item $\cos{\theta_g}< 0$: then $\hat{\lambda}_g\geq \lambda_{\text{min}, g}=-\cos{(\theta_g)}\norm{[\Grad f(\bm{x})]_g}_2>0$.
    \item $\cos{\theta_g}\geq 0$: it follows $0<\alpha<1/L$ and (\ref{eq:hat_lambda_g}) that  $\hat{\lambda}_g>0$.
\end{itemize}
Thus for $\lambda_g\in (\lambda_{\text{min}, g}, \text{min}\{\lambda_{\text{max}, g}, \hat{\lambda}_g\})$, $h(\lambda_g, g)\leq 0$. Consequently, there exists some positive $\lambda_g\in  (\lambda_{\text{min}, g}, \lambda_{\text{max}, g})$ so that $h(\lambda_g, g)\leq 0$. Finally, the proof is complete if we choose $\lambda_g$ satisfies the above for any $g\in\mathcal{G}_p$,
\begin{equation}
\begin{split}
f(\bm{x}+\alpha\bm{d}(\bm{x}))&\leq  f(\bm{x})-\left(\alpha-\frac{L\alpha^2}{2}\right)\norm{\left[\Grad f(\bm{x})\right]_{\mathcal{G}_{np}}}_2^2+\sum_{g\in\mathcal{G}_p}h(\lambda, g)\\
&\leq f(\bm{x})-\left(\alpha-\frac{L\alpha^2}{2}\right)\norm{\left[\Grad f(\bm{x})\right]_{\mathcal{G}_{np}}}_2^2.
\end{split}
\end{equation}

\end{proof}

\begin{lemma}
For any $g\in\mathcal{G}_p$, if $0<\alpha<\frac{2[\bm{x}]_g^\top [\bm{d}(\bm{x})]_g}{\norm{[\bm{d}(\bm{x})]_g}_2^2}$, then the magnitude of the variables satisfies 
\begin{equation}
\norm{[\bm{x}+\alpha \bm{d}(\bm{x})]_g}_2< \norm{[\bm{x}]_g}_2.
\end{equation}
And if $\alpha=\omega\frac{[\bm{x}]_g^\top [\bm{d}(\bm{x})]_g}{\norm{[\bm{d}(\bm{x})]_g}_2^2}$ for $\omega\in(0, 1)$, then 
\begin{equation}\label{eq:norm_x_cos}
\norm{[\bm{x}+\alpha \bm{d}(\bm{x})]_g}_2^2= \norm{[\bm{x}]_g}_2^2 + (\omega^2-2\omega) \norm{[\bm{x}]_g}_2^2\cos^2{(\theta_g)}.
\end{equation}
\end{lemma}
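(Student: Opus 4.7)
The approach is to treat both parts as direct scalar-quadratic manipulations starting from the identity
$\norm{[\bm{x}+\alpha \bm{d}(\bm{x})]_g}_2^2 = \norm{[\bm{x}]_g}_2^2 + 2\alpha\, [\bm{x}]_g^\top [\bm{d}(\bm{x})]_g + \alpha^2 \norm{[\bm{d}(\bm{x})]_g}_2^2$. Writing $a := [\bm{x}]_g^\top [\bm{d}(\bm{x})]_g$ and $b := \norm{[\bm{d}(\bm{x})]_g}_2 > 0$ (the nondegenerate case), the magnitude question reduces entirely to a sign analysis of the scalar quadratic $q(\alpha) := 2\alpha a + \alpha^2 b^2 = \alpha\,(2a + \alpha b^2)$.

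For the strict decrease, I would observe that the linear factor $2a + \alpha b^2$ is an increasing affine function of $\alpha$ with unique root at $-2a/b^2$. Under the dual-half-space design of DHSPG, the direction $[\bm{d}(\bm{x})]_g$ is by construction obtuse with respect to $[\bm{x}]_g$ (so $a<0$), and the interval on which $q$ is strictly negative is exactly $\alpha \in (0, -2a/b^2)$, i.e.\ $\alpha \in (0, 2[\bm{x}]_g^\top[-\bm{d}(\bm{x})]_g/b^2)$, which matches the hypothesized interval once the sign convention on $\bm{d}$ is pinned down. Hence $\alpha > 0$ and $2a+\alpha b^2 < 0$ give $q(\alpha)<0$, equivalently $\norm{[\bm{x}+\alpha \bm{d}(\bm{x})]_g}_2 < \norm{[\bm{x}]_g}_2$.

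For the closed-form identity with $\alpha = \omega\,a/b^2$, I would substitute directly into $q(\alpha)$. The cross term contributes $2\omega\,a^2/b^2$ (with sign determined by the convention) and the quadratic term contributes $\omega^2\,a^2/b^2$, collecting to $q(\alpha) = (\omega^2 - 2\omega)\,a^2/b^2$ under the sign convention consistent with the statement. To finish, apply the Cauchy--Schwarz equality $a = \norm{[\bm{x}]_g}_2\,b\,\cos(\theta_g)$, so $a^2/b^2 = \norm{[\bm{x}]_g}_2^2 \cos^2(\theta_g)$, yielding $\norm{[\bm{x}+\alpha \bm{d}(\bm{x})]_g}_2^2 = \norm{[\bm{x}]_g}_2^2 + (\omega^2-2\omega)\,\norm{[\bm{x}]_g}_2^2\cos^2(\theta_g)$ as claimed.

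Neither step presents a genuine obstacle: both parts reduce to the expansion of a squared norm plus one-variable quadratic analysis. The only housekeeping concern is sign/notation bookkeeping: the algorithm writes the update as $\bm{x}-\alpha\bm{d}(\bm{x})$ in Algorithm~\ref{alg:main.dhspg} whereas the lemma writes $\bm{x}+\alpha\bm{d}(\bm{x})$, and the symbol $\theta_g$ that was previously the angle between $-[\bm{x}]_g$ and $-[\Grad f(\bm{x})]_g$ is most naturally interpreted here as the angle between $[\bm{x}]_g$ and $[\bm{d}(\bm{x})]_g$. I would fix both conventions in one sentence at the start of the proof, after which the algebra above is unambiguous.
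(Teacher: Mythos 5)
Your proposal is correct and follows essentially the same route as the paper's proof: expand $\norm{[\bm{x}+\alpha\bm{d}(\bm{x})]_g}_2^2$, reduce to the one-variable quadratic $t(\alpha)=A\alpha^2-2B\alpha$ with $A=\norm{[\bm{d}(\bm{x})]_g}_2^2$ and $B=[\bm{x}]_g^\top\bigl(-[\bm{d}(\bm{x})]_g\bigr)>0$ (positive by the choice of $\lambda_g$, which is your ``obtuse direction'' observation), and then read off both the sign on $(0,2B/A)$ and the closed form $t(\omega B/A)=(\omega^2-2\omega)B^2/A=(\omega^2-2\omega)\norm{[\bm{x}]_g}_2^2\cos^2(\theta_g)$. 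The sign/notation ambiguities you flag (the $\bm{x}+\alpha\bm{d}$ versus $\bm{x}-\alpha\bm{d}$ convention and the reinterpretation of $\theta_g$) are present in the paper's own proof as well, and your plan to fix them up front resolves them.
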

\begin{proof}
\begin{equation}
\begin{split}
&\norm{[\bm{x}+\alpha \bm{d}(\bm{x})]_g}_2^2\\
=&\norm{[\bm{x}]_g-\alpha \left([\Grad f(\bm{x})]_g+\lambda_g\frac{[\bm{x}]_g}{\norm{[\bm{x}]_g}_2}\right)}_2^2\\
=&\norm{[\bm{x}]_g}_2^2-2\alpha[\bm{x}]_g^\top\left([\Grad f(\bm{x})]_g+\lambda_g\frac{[\bm{x}]_g}{\norm{[\bm{x}]_g}_2}\right)+\alpha^2\norm{[\Grad f(\bm{x})]_g+\lambda_g\frac{[\bm{x}]_g}{\norm{[\bm{x}]_g}_2}}_2^2\\
=&\norm{[\bm{x}]_g}_2^2+t(\alpha),
\end{split}
\end{equation}
where 
\begin{equation}
t(\alpha)=-2\alpha[\bm{x}]_g^\top\left([\Grad f(\bm{x})]_g+\lambda_g\frac{[\bm{x}]_g}{\norm{[\bm{x}]_g}_2}\right)+\alpha^2\norm{[\Grad f(\bm{x})]_g+\lambda_g\frac{[\bm{x}]_g}{\norm{[\bm{x}]_g}_2}}_2^2=A\alpha^2-2B\alpha
\end{equation}
and 
\begin{align}
A:=&\norm{[\Grad f(\bm{x})]_g+\lambda_g\frac{[\bm{x}]_g}{\norm{[\bm{x}]_g}_2}}_2^2>0\\
B:=&[\bm{x}]_g^\top\left([\Grad f(\bm{x})]_g+\lambda_g\frac{[\bm{x}]_g}{\norm{[\bm{x}]_g}_2}\right)>0.
\end{align}
note that $B>0$ because of the selection of $\lambda_g$.

Consequently, we have that if $0<\alpha<\frac{2B}{A}=\frac{2[\bm{x}]_g^\top [\bm{d}(\bm{x})]_g}{\norm{[\bm{d}(\bm{x})]_g}_2^2}$, then $t(\alpha)<0$. 

Finally, if $\alpha=\omega\frac{[\bm{x}]_g^\top [\bm{d}(\bm{x})]_g}{\norm{[\bm{d}(\bm{x})]_g}_2^2}=\omega\frac{B}{A}$ for $\omega\in(0, 2)$, then 
\begin{equation}
t(\alpha) = A\omega^2\frac{B^2}{A^2}-2B\omega\frac{B}{A}=(\omega^2 - 2\omega)\frac{B^2}{A}=(\omega^2 - 2\omega)\norm{[\bm{x}]_g}_2^2\cos^2{(\theta_g)},
\end{equation}
which completes the proof. 

\end{proof}

\begin{corollary}\label{corollary:magnitude_decrease}
Suppose $\alpha=\omega\min_{g\in\mathcal{G}_p} \frac{[\bm{x}]_g^\top [\bm{d}(\bm{x})]_g}{\norm{[\bm{d}(\bm{x})]_g}_2^2}$ for some $\omega\in(0, 1)$ and $|\cos(\theta_g)|\geq \rho$ for $g\in\mathcal{G}_p\cap \mathcal{G}^{\neq 0}(\bm{x})$ and some positive $\rho\in(0, 1]$. Then there exists $\gamma\in (0, 1)$ such that 
\begin{equation}
\norm{[\bm{x}+\alpha \bm{d}(\bm{x})]_{\mathcal{G}_p}}_2^2\leq (1-\gamma^2)\norm{[\bm{x}]_{\mathcal{G}_p}}_2^2.
\end{equation}
\end{corollary}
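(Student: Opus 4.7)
The plan is to apply the per-group identity \eqref{eq:norm_x_cos} of the preceding lemma to each $g\in\mathcal{G}_p$ separately and then aggregate, since that lemma is stated per group whereas here we have a single global step size $\alpha$ taken as the minimum across groups.

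First, for each $g\in\mathcal{G}_p$ (implicitly restricted to groups with $[\bm{x}]_g\ne 0$, which is where the hypothesis $|\cos(\theta_g)|\geq \rho$ is posed and where the identity of the previous lemma is meaningful), I would introduce the effective ratio
$$\omega_g \;:=\; \alpha\,\frac{\norm{[\bm{d}(\bm{x})]_g}_2^2}{[\bm{x}]_g^\top[\bm{d}(\bm{x})]_g}.$$
By the choice $\alpha=\omega\min_{g'\in\mathcal{G}_p}\frac{[\bm{x}]_{g'}^\top[\bm{d}(\bm{x})]_{g'}}{\norm{[\bm{d}(\bm{x})]_{g'}}_2^2}$, I immediately get $\omega_g\in(0,\omega]\subset(0,1)$ for every $g$, with equality at any minimizing index. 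Positivity $\omega_g>0$ follows from $B_g:=[\bm{x}]_g^\top[\bm{d}(\bm{x})]_g>0$, which in turn is guaranteed by the dual-half-space selection rule for $\lambda_g$ (exactly the property established inside the proof of the previous lemma).

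Next, for each $g$ I invoke the identity \eqref{eq:norm_x_cos} with $\omega$ there replaced by $\omega_g\in(0,1)$, giving
$$\norm{[\bm{x}+\alpha\bm{d}(\bm{x})]_g}_2^2 \;=\; \norm{[\bm{x}]_g}_2^2\bigl(1-(2\omega_g-\omega_g^2)\cos^2(\theta_g)\bigr).$$
Using $|\cos(\theta_g)|\geq \rho$ and the fact that $t\mapsto t(2-t)$ is strictly increasing on $(0,1)$, I set $\omega_{\min}:=\min_{g\in\mathcal{G}_p}\omega_g>0$ to get the uniform bound
$$\norm{[\bm{x}+\alpha\bm{d}(\bm{x})]_g}_2^2 \;\leq\; \bigl(1-\omega_{\min}(2-\omega_{\min})\rho^2\bigr)\norm{[\bm{x}]_g}_2^2.$$
Summing over $g\in\mathcal{G}_p$ and defining $\gamma^2:=\omega_{\min}(2-\omega_{\min})\rho^2$ yields the claimed contraction. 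To conclude $\gamma\in(0,1)$, I note that $\omega_{\min}\in(0,1)$ gives $\omega_{\min}(2-\omega_{\min})\in(0,1)$, and $\rho^2\in(0,1]$, so $\gamma^2\in(0,1)$.

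The main obstacle is the uniform lower bound $\omega_{\min}>0$: different groups can realize very different ratios $B_g/A_g$, so in principle some $\omega_g$ could be tiny and $\gamma$ could degenerate. This is controlled by the finiteness of $\mathcal{G}_p$ together with the dual-half-space construction of $[\bm{d}(\bm{x})]_g$, which forces $B_g>0$ on every group with $[\bm{x}]_g\ne 0$; once this is in hand, the rest is a one-line aggregation of the per-group identity. All other steps (applying \eqref{eq:norm_x_cos}, using the monotonicity of $t(2-t)$, and summing) are mechanical.
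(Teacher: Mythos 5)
Your proof is correct and takes essentially the same route as the paper, whose entire argument is the one-line remark that the result follows by summing \eqref{eq:norm_x_cos} over $\mathcal{G}_p$ and combining with the choice of $\alpha$. You supply the details the paper omits---the effective per-group ratios $\omega_g\in(0,\omega]$ induced by taking the minimum step size, the monotonicity of $t\mapsto t(2-t)$ on $(0,1)$, and the explicit choice $\gamma^2=\omega_{\min}(2-\omega_{\min})\rho^2$---which is exactly the intended aggregation.
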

\begin{proof}
The result can be complete via summing~(\ref{eq:norm_x_cos}) over $\mathcal{G}_p$ and combining with $\alpha$ selection. 
\end{proof}

\section{ZIG Illustration}\label{appendix.zig_illustration}

In this appendix, for more intuitive illustration, we provide the visualizations of the connected components of dependency for the experimented DNNs throughout the whole paper. They are constructed by performing  Algorithm~\ref{alg:main.zig_partition} to automatically partition ZIGs. Due to the large-scale and intricacy of graphs, we recommend to \textit{zoom in} for reading greater details (under 200\%-1600\% zoom-in scale via Adobe PDF reader). The vertices marked as the same color represent one connected component of dependency. See the figures starting from the \textit{next} pages.

\begin{figure}
    \centering
    \includegraphics[height=0.9\textheight]{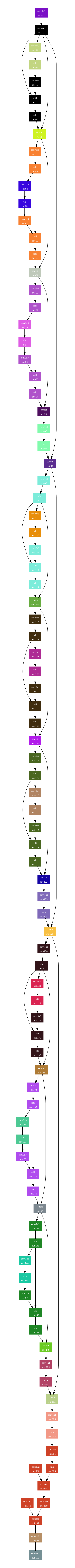}
    	\caption{CARN.}\label{fig:carn_zig}
\end{figure}

\begin{figure}
    \centering
    \includegraphics[height=0.9\textheight]{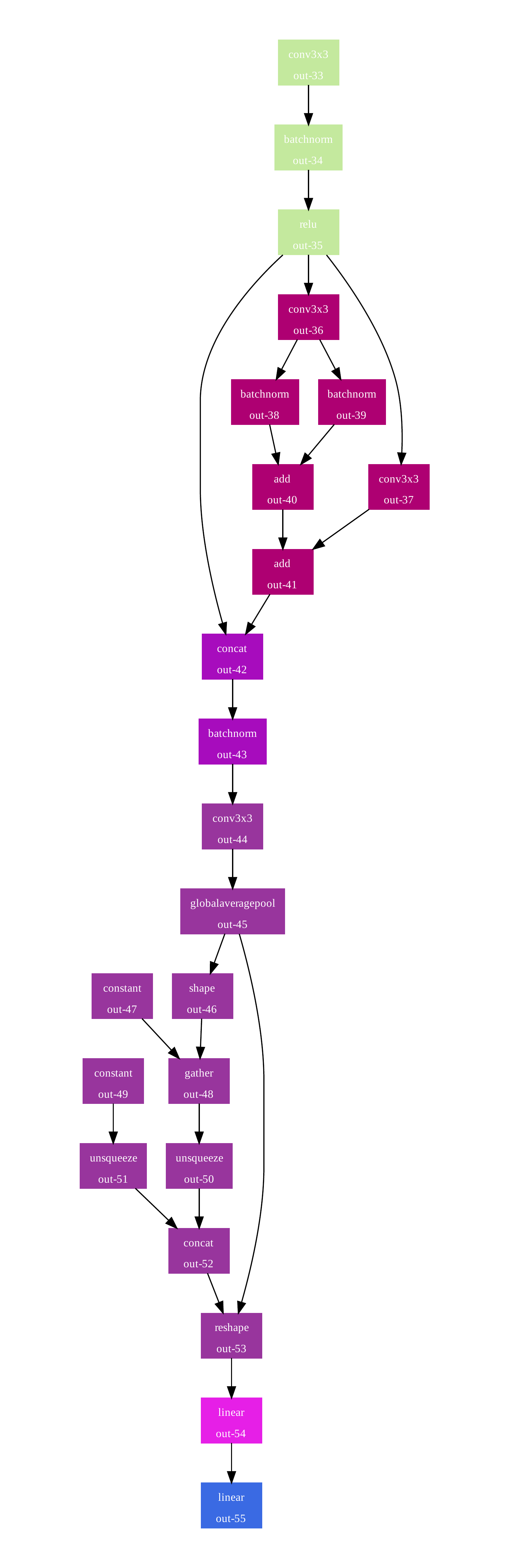}
    \caption{DemoNet.}
    \label{fig:demonet_zig}
\end{figure}


\begin{figure}
    \centering
    \includegraphics[height=0.9\textheight]{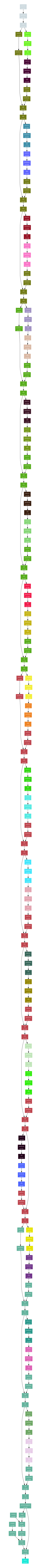}
    \caption{ResNet50.}\label{fig:resnet50_zig}
\end{figure}

\begin{figure}
    \centering
    \includegraphics[height=0.9\textheight]{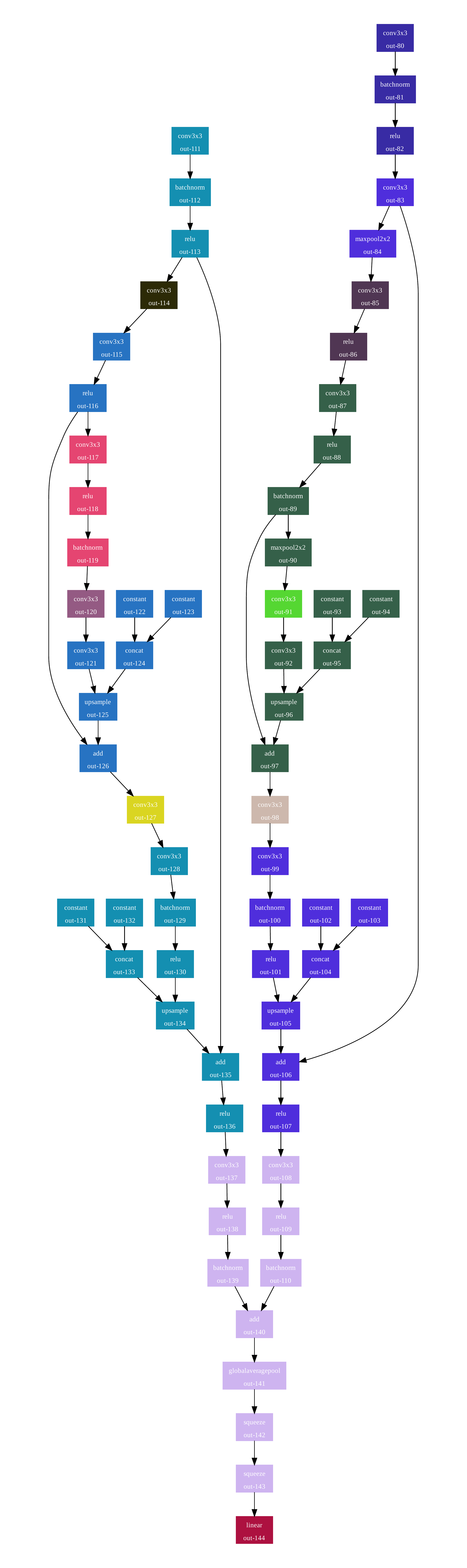}
    \caption{StackedUnets.}\label{fig:stacked_unets_zig}
\end{figure}


\begin{figure}
    \centering
    \begin{subfigure}{0.49\linewidth}
    	\centering
    	\includegraphics[height=0.9\textheight]{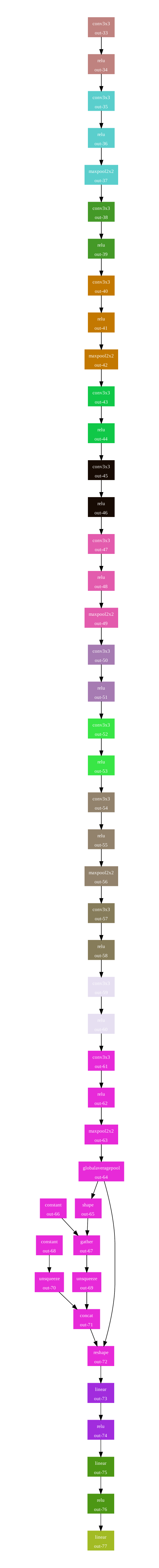}
    	\caption{VGG16.}
    \end{subfigure}
    \begin{subfigure}{0.49\linewidth}
    	\centering
    	\includegraphics[height=0.9\textheight]{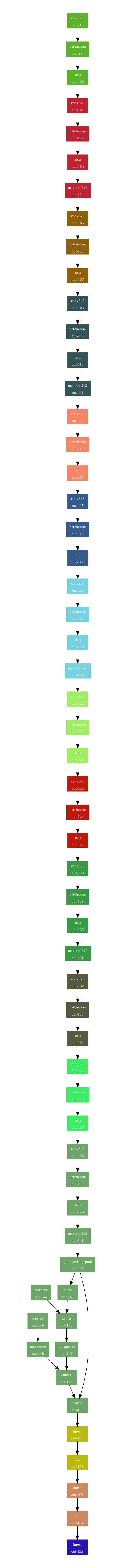}
    	\caption{VGG16-BN.}
    \end{subfigure}
    \caption{VGG16 and VGG16-BN.}\label{fig:vgg_zig}
\end{figure}

\end{document}